\definecolor{g1}{gray}{0.9}
\definecolor{g2}{gray}{0.7}
\tikzstyle{freccia}=[->,color=mygray,line width=1pt]
\newcommand{\bo}[1]{\textbf{#1}}
\newcommand{\pn}{\mathbb{P}_n}
\newcommand{\interp}[1]{\langle #1 \rangle}
\newcommand{\up}[1]{\overline{#1}}
\newcommand{\lo}[1]{\underline{#1}}
\newtheorem{theorem}{Theorem}
\newtheorem{definition}{Definition}
\newtheorem{example}{Example}
\journal{IJAR}
\begin{document}
\begin{frontmatter}
\title{Tractable Inference in\\Credal Sentential Decision Diagrams}
\author[IDSIA]{Lilith Mattei}
\ead{lilith@idsia.ch}
\author[IDSIA]{Alessandro Antonucci\corref{cor}}
\ead{alessandro@idsia.ch}
\author[USP]{Denis Deratani Mau\'a}
\ead{ddm@ime.usp.br}
\author[IDSIA]{Alessandro Facchini}
\ead{alessandro.facchini@idsia.ch}
\author[USP]{Julissa Villanueva Llerena}
\ead{jgville@ime.usp.br}
\address[IDSIA]{Istituto Dalle Molle di Studi per l'Intelligenza Artificiale, Manno-Lugano, Switzerland}
\address[USP]{Institute of Mathematics and Statistics,
  University of S\~ao Paulo, S\~ao Paulo, Brazil}
\cortext[cor]{Corresponding author.}
\begin{abstract}
\emph{Probabilistic sentential decision diagrams} are logic
circuits where the inputs of disjunctive gates are annotated
by probability values. They allow for a compact
representation of joint probability mass functions defined
over sets of Boolean variables, that are also consistent with
the logical constraints defined by the circuit.
The probabilities in such a model are usually ``learned'' from
a set of observations. This leads to overconfident and
prior-dependent inferences when data are scarce, unreliable or
conflicting.
In this work, we develop the \emph{credal sentential decision
  diagrams}, a generalisation of their probabilistic
counterpart that allows for replacing the local probabilities
with (so-called \emph{credal}) sets of mass functions. These
models induce a joint credal set over the set of Boolean
variables, that sharply assigns probability zero to states
inconsistent with the logical constraints. Three inference
algorithms are derived for these models. These allow to
compute: (i) the lower and upper probabilities of an
observation for  an arbitrary number of variables; (ii) the
lower and upper conditional probabilities for the state of a
single variable given an observation; (iii) whether or not all
the probabilistic sentential decision diagrams compatible with
the credal specification have the same most probable
explanation of a given set of variables given an observation
of the other variables. These inferences are \emph{tractable},
as all the three algorithms, based on bottom-up traversal with
local linear programming tasks on the disjunctive gates, can
be solved in polynomial time with respect to the circuit size.
The first algorithm is always exact, while the remaining two
might induce a conservative (outer) approximation in the case of multiply connected circuits. A semantics for this approximation together with an auxiliary algorithm able to decide whether or not the result is exact is also provided together with a brute-force characterization of the exact inference in these cases. For a first empirical validation, we consider a simple application based on noisy seven-segment display images. The credal models are observed to properly distinguish between easy and hard-to-detect instances and outperform other generative models not able to cope with logical constraints.
\end{abstract}
\begin{keyword}
Probabilistic graphical models, tractable models, imprecise probability, credal sets, probabilistic circuits, sentential decision diagrams, sum-product networks.
\end{keyword}
\end{frontmatter}
\section{Introduction}\label{sec:intro}
Probabilistic graphical models \cite{koller2009,Darwiche2009} are widely
used in machine learning and knowledge-based decision-support
systems, due to their ability to provide compact and
intuitive 
descriptions of joint probability mass functions by exploiting
conditional independence relations encoded in a graph.
However, the
ability to provide compact representation does not imply that
inferences with the model can be computed efficiently
\cite{Roth96,kwisthout2010necessity,DeCampos2011}, and
practicioners need to rely on approximate inference algorithms
with no guarantees.

To allow for fast and accurate inference, some authors have proposed
abandoning the intuitive (declarative) semantics of graphical
models in favor of a more procedural (and less transparent)
representation of probability mass functions as arithmetic (or
logic) circuits \cite{acs,poon2011sum,cutset,kisa2014}. The latter have been broadly termed
\emph{tractable models}, for their ability to provide
polynomial-time inference with respect to the circuit size.
\emph{Sum-product networks} (SPNs) \cite{poon2011sum} are the
most popular example in this area. Remarkably, SPNs can be
also intended as a probabilistic counterpart of deep neural
networks and, when used for machine learning, they offer
competitive performances in many tasks
\cite{peharz2018probabilistic,rat}.

Another prominent example of tractable models are
\emph{probabilistic sentential decision diagrams} (PSDDs)
\cite{kisa2014}. Roughly speaking, a PSDD is a logical circuit
representation of a joint probability mass function that
assigns zero probability to the impossible states of the
underlying logical constraints. Notably, PSDDs allow for
enriching statistical models with knowledge about constraints
in the domain without sacrificing efficient inference
\cite{structured,snb,routes,subset}.

When data are scarce, conflicting or unreliable, learning
sharp estimates of probability values can lead to inferences
that are dominated by the choice of hyperparameters and priors.
The area of \emph{imprecise probabilities} advocate for a more
flexible and robust representation of statistical models, 
through the use of \emph{credal sets}, that is, sets of
probability mass functions induced by a (typically finite)
number of linear constraints \cite{walley1996inferences}.
This lead to the development of generalizations of graphical
models such as \emph{credal networks} \cite{cozman2000}, that
extend Bayesian networks to allow for the representation of
imprecisely specified conditional probability values.

Recently, SPNs have also been extended to the imprecise
probability setting, giving rise to \emph{Credal Sum-Product
  Networks} (CSPNs)  \cite{maua2017credal,
  maua2018robustifying, villanueva2019isipta}.
These models allow for a richer representation of uncertainty
without compromising computational tractability of inferences.

In this work, we develop the \emph{Credal Setential Decision
  Diagrams} (CSDDs), a credal-set extension of probabilistic
sentential decision diagrams that allow for richer
representation of uncertainty with small computational
overhead. Compared to CSPNs, CSDDs allow for a more principled
semantics of local credal sets.

We take advantage of the structural similarities
between PSDDs and SPNs to adapt many of the algorithms
originally proposed for CSPNs
\cite{maua2017credal,villanueva2019isipta} for CSDDs. More
specifically, a PSDD can be seen as a special type of
\emph{selective} SPNs \cite{selective}, where differently from
standard SPNs, \emph{Maximum-A-Posteriori} (MAP) inference and
parameter learning can be performed efficiently
\cite{peharz2016,approx}.
As a result we therefore deliver three algorithms for CSDDs
allowing to compute: (i) \emph{marginals}, that is, the lower
and upper probabilities of an observation of an arbitrary
number of model variables, (ii) \emph{conditionals}, that is,
the lower and upper probabilities of single queried variable
given an observations of some other variables; and (iii)
\emph{MAP robustness}, that is, checking whether or not the
most probable configuration for some queried variables given
an observation of the other ones is the same for all PSDDs
consistent with a CSDD. Those inferences are tractable as all
the algorithms only requires a bottom-up traversal of the
logical circuit underlying the model with local linear
programming tasks to be solved on the disjunctive nodes, thus
being polynomial in the circuit size. The inferences are
always exact for the first task, while for the remaining two
tasks the procedure delivers a conservative (outer) approximation for multiply connected circuits (see Definition \ref{def:topology}). For these cases, a polynomial-time algorithm to check whether or not the inference is exact is also provided together with a bound on the complexity required to compute exact inference by brute 
force.

This paper extends a preliminary version \cite{mattei19}
with the inclusion of the algorithm for MAP robustness, the
characterization of the approximation in the multiply
connected case, and an experimental validation.

The rest of the paper if organized as follows.
In the next section we open the discussion with a toy example to be used along the paper to illustrate our approach. Section \ref{sec:back} contains background material about credal sets and PSDDs. The technical results are presented in Section \ref{sec:csdd} where we define CSDDs, and in Sections \ref{sec:marg}-\ref{sec:map} where the three inference algorithms are derived. The results of an experimental validation are discussed in Section \ref{sec:experiments}, while conclusions and outlooks are in Section \ref{sec:conc}. Proofs are in the appendix together with some additional technical material.

\section{A Demonstrative Example}\label{sec:example}
We begin the discussion with a minimalistic example to be used as an informal introduction to the basic concepts and problems considered in the paper. Formal definitions of these basics are provided in the next section. The example is used in the other sections to demonstrate the main ideas derived in our work and show how these can be applied.

Consider four-pixel black-and-white squared images in Figure \ref{fig:squares}. These can be regarded as joint states of four Boolean variables. We assume that, out of sixteen possible configurations, only those in the top row of the Figure \ref{fig:squares} are permitted, while the remaining six in the bottom row are forbidden by some structural constraint (e.g., only ``lines'' and ``points'' can be depicted).

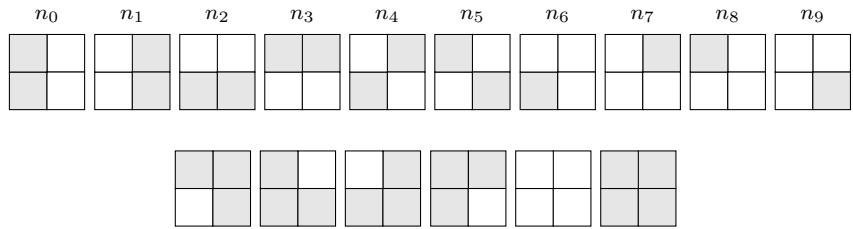
\begin{figure}[htp!]
    \centering
        \begin{tikzpicture}[scale=.5]
    \fill[black!10,draw=black] (0,0) rectangle (1,1);
    \fill[black!0,draw=black]  (1,1) rectangle (2,2);
    \fill[black!10,draw=black]  (0,1) rectangle (1,2);
    \fill[black!0,draw=black]  (1,0) rectangle (2,1);
    \node[] at (1,2.5) {\footnotesize$n_0$};
    \end{tikzpicture}
    \begin{tikzpicture}[scale=.5]
    \fill[black!0,draw=black] (0,0) rectangle (1,1);
    \fill[black!10,draw=black]  (1,1) rectangle (2,2);
    \fill[black!0,draw=black]  (0,1) rectangle (1,2);
    \fill[black!10,draw=black]  (1,0) rectangle (2,1);
    \node[] at (1,2.5) {\footnotesize$n_1$};
    \end{tikzpicture}
    \begin{tikzpicture}[scale=.5]
    \fill[black!10,draw=black] (0,0) rectangle (1,1);
    \fill[black!0,draw=black]  (1,1) rectangle (2,2);
    \fill[black!0,draw=black]  (0,1) rectangle (1,2);
    \fill[black!10,draw=black]  (1,0) rectangle (2,1);
    \node[] at (1,2.5) {\footnotesize$n_2$};
    \end{tikzpicture}
    \begin{tikzpicture}[scale=.5]
    \fill[black!0,draw=black] (0,0) rectangle (1,1);
    \fill[black!10,draw=black]  (1,1) rectangle (2,2);
    \fill[black!10,draw=black]  (0,1) rectangle (1,2);
    \fill[black!0,draw=black]  (1,0) rectangle (2,1);
    \node[] at (1,2.5) {\footnotesize$n_3$};
    \end{tikzpicture}
    \begin{tikzpicture}[scale=.5]
    \fill[black!10,draw=black] (0,0) rectangle (1,1);
    \fill[black!10,draw=black]  (1,1) rectangle (2,2);
    \fill[black!0,draw=black]  (0,1) rectangle (1,2);
    \fill[black!0,draw=black]  (1,0) rectangle (2,1);
    \node[] at (1,2.5) {\footnotesize$n_4$};
    \end{tikzpicture}
    \begin{tikzpicture}[scale=.5]
    \fill[black!0,draw=black] (0,0) rectangle (1,1);
    \fill[black!0,draw=black]  (1,1) rectangle (2,2);
    \fill[black!10,draw=black]  (0,1) rectangle (1,2);
    \fill[black!10,draw=black]  (1,0) rectangle (2,1);
    \node[] at (1,2.5) {\footnotesize$n_5$};
    \end{tikzpicture}
    \begin{tikzpicture}[scale=.5]
    \fill[black!10,draw=black] (0,0) rectangle (1,1);
    \fill[black!0,draw=black]  (1,1) rectangle (2,2);
    \fill[black!0,draw=black]  (0,1) rectangle (1,2);
    \fill[black!0,draw=black]  (1,0) rectangle (2,1);
    \node[] at (1,2.5) {\footnotesize$n_6$};
    \end{tikzpicture}
    \begin{tikzpicture}[scale=.5]
    \fill[black!0,draw=black] (0,0) rectangle (1,1);
    \fill[black!10,draw=black]  (1,1) rectangle (2,2);
    \fill[black!0,draw=black]  (0,1) rectangle (1,2);
    \fill[black!0,draw=black]  (1,0) rectangle (2,1);
    \node[] at (1,2.5) {\footnotesize$n_7$};
    \end{tikzpicture}
    \begin{tikzpicture}[scale=.5]
    \fill[black!0,draw=black] (0,0) rectangle (1,1);
    \fill[black!0,draw=black]  (1,1) rectangle (2,2);
    \fill[black!10,draw=black]  (0,1) rectangle (1,2);
    \fill[black!0,draw=black]  (1,0) rectangle (2,1);
    \node[] at (1,2.5) {\footnotesize$n_8$};
    \end{tikzpicture}
    \begin{tikzpicture}[scale=.5]
    \fill[black!0,draw=black] (0,0) rectangle (1,1);
    \fill[black!0,draw=black]  (1,1) rectangle (2,2);
    \fill[black!0,draw=black]  (0,1) rectangle (1,2);
    \fill[black!10,draw=black]  (1,0) rectangle (2,1);
    \node[] at (1,2.5) {\footnotesize$n_9$};
    \end{tikzpicture}
    \vskip 5mm
    \begin{tikzpicture}[scale=.5]
    \fill[black!0,draw=black] (0,0) rectangle (1,1);
    \fill[black!10,draw=black]  (1,1) rectangle (2,2);
    \fill[black!10,draw=black]  (0,1) rectangle (1,2);
    \fill[black!10,draw=black]  (1,0) rectangle (2,1);
    \end{tikzpicture}
    \begin{tikzpicture}[scale=.5]
    \fill[black!10,draw=black] (0,0) rectangle (1,1);
    \fill[black!0,draw=black]  (1,1) rectangle (2,2);
    \fill[black!10,draw=black]  (0,1) rectangle (1,2);
    \fill[black!10,draw=black]  (1,0) rectangle (2,1);
    \end{tikzpicture}
    \begin{tikzpicture}[scale=.5]
    \fill[black!10,draw=black] (0,0) rectangle (1,1);
    \fill[black!10,draw=black]  (1,1) rectangle (2,2);
    \fill[black!0,draw=black]  (0,1) rectangle (1,2);
    \fill[black!10,draw=black]  (1,0) rectangle (2,1);
    \end{tikzpicture}
    \begin{tikzpicture}[scale=.5]
    \fill[black!10,draw=black] (0,0) rectangle (1,1);
    \fill[black!10,draw=black]  (1,1) rectangle (2,2);
    \fill[black!10,draw=black]  (0,1) rectangle (1,2);
    \fill[black!0,draw=black]  (1,0) rectangle (2,1);
    \end{tikzpicture}
    \begin{tikzpicture}[scale=.5]
    \fill[black!0,draw=black] (0,0) rectangle (1,1);
    \fill[black!0,draw=black]  (1,1) rectangle (2,2);
    \fill[black!0,draw=black]  (0,1) rectangle (1,2);
    \fill[black!0,draw=black]  (1,0) rectangle (2,1);
    \end{tikzpicture}
    \begin{tikzpicture}[scale=.5]
    \fill[black!10,draw=black] (0,0) rectangle (1,1);
    \fill[black!10,draw=black]  (1,1) rectangle (2,2);
    \fill[black!10,draw=black]  (0,1) rectangle (1,2);
    \fill[black!10,draw=black]  (1,0) rectangle (2,1);
    \end{tikzpicture}
    \caption{Permitted (top) and un-permitted (bottom) four-pixel squared images\label{fig:squares}}
\end{figure}

Let us denote the four variables as $(X_1,X_2,X_3,X_4)$, where $X_1$ corresponds to the top-left pixel and the other ones follow a clock-wise order. If black pixel corresponds to the true state of the variable, the formula implementing the constraints can be written as:\footnote{We assume the reader to be familiar with basic propositional logic notation. More details about that can be found in Section \ref{sec:sdd}.}
\begin{equation}\label{eq:sdd_squares}
\gamma := \left[ \bigvee_{1\leq i \leq 4} X_i \right] \land  \left[ \bigvee_{1\leq  i\neq j \leq 4} \lnot X_i \wedge \lnot X_j \right] 
\end{equation}
where the two conjunctive clauses
impose, respectively, that at least one pixel is black and two
pixels are white. These
constraints rule out exactly the configurations in the bottom
row in Figure \ref{fig:squares}.

Consider the logic circuit in Figure \ref{fig:sdd}, where
conjunctive gates are depicted in blue and they alternate with
the disjunctive (red) ones. For the moment, ignore the
parameters associated with the inputs of the disjunctive gates
and the \emph{top} (i.e., $\top$) inputs of the conjunctive
ones. The reader can verify that the formula implemented by the circuit is equivalent to $\gamma$ in Equation \eqref{eq:sdd_squares}.\footnote{To see this, notice that the logic circuit in Figure \ref{fig:sdd} encodes formula 
$$\begin{array}{rll}
\phi:= & \big( (\lnot X_1 \land \lnot X_2) \land ((\lnot X_3 \land X_4) \lor X_3) \big) & \lor \\
& \big( ((X_1 \land \lnot X_2) \lor  (\lnot X_1 \land X_2) ) \land 	(( X_3 \land \lnot X_4) \lor \lnot X_3) \big) & \lor \\
& \big( X_1 \land X_2 \land \lnot X_3 \land \lnot X_4 \big) &
\end{array}$$
The three disjuncts are mutually exclusive. Models of the first disjuncts correspond to four-pixel squared images whose counts are $n_2, n_6, n_9$, models of the second disjuncts correspond to four-pixel squared images whose counts are $n_0, n_1, n_4, n_5, n_7$ and $n_8$, and finally the unique model of the third disjuncts corresponds to the four-pixel squared image whose count is $n_3$.
}

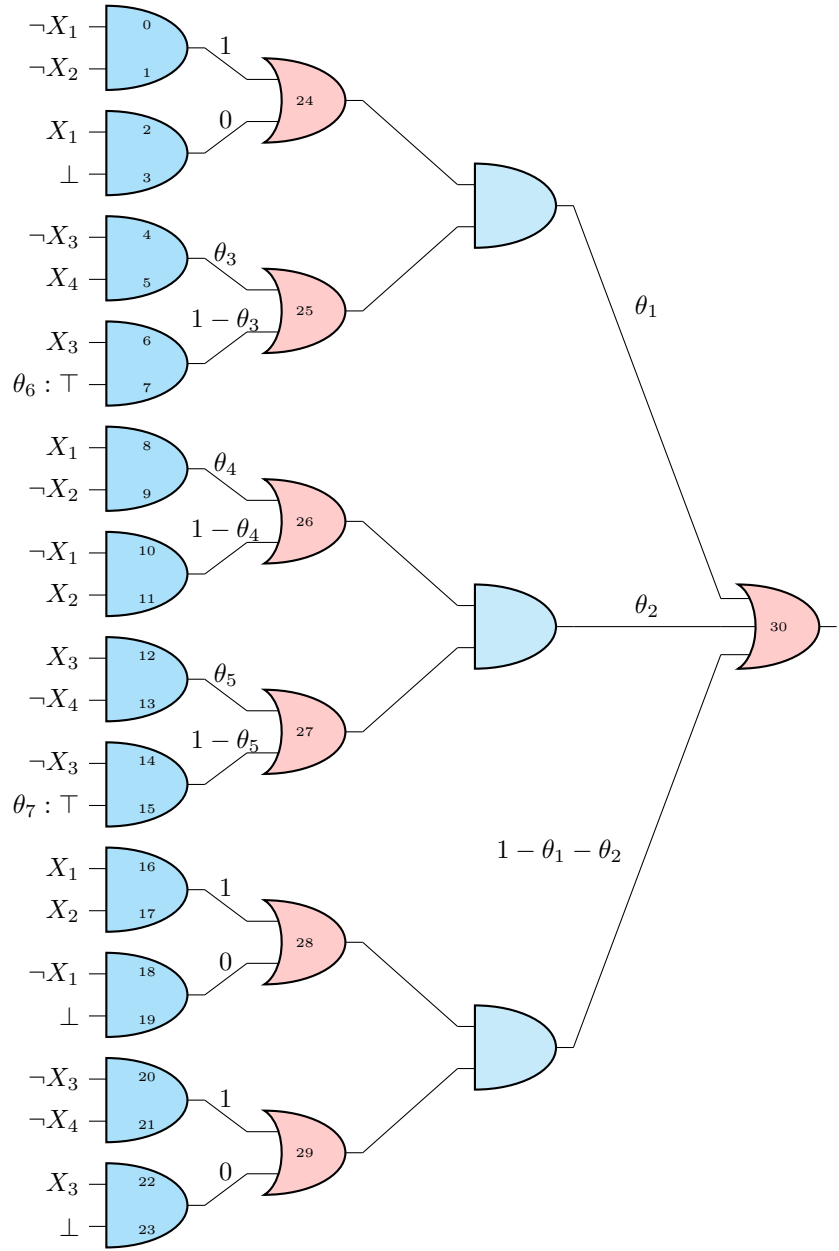
\begin{figure}
\centering
\begin{tikzpicture}[scale=.7]
\draw
(0,0) node[and port,fill=cyan!30!white] (myand1) {\tiny $\begin{array}{l}0\\\\1\end{array}$}
(0,-2) node[and port,fill=cyan!30!white] (myand2) {\tiny $\begin{array}{l}2\\\\3\end{array}$}
(0,-4) node[and port,fill=cyan!30!white] (myand3) {\tiny $\begin{array}{l}4\\\\5\end{array}$}
(0,-6) node[and port,fill=cyan!30!white] (myand4) {\tiny $\begin{array}{l}6\\\\7\end{array}$}
(0,-8) node[and port,fill=cyan!30!white] (myand5) {\tiny $\begin{array}{l}8\\\\9\end{array}$}
(0,-10) node[and port,fill=cyan!30!white] (myand6) {\tiny $\begin{array}{l}10\\\\11\end{array}$}
(0,-12) node[and port,fill=cyan!30!white] (myand7) {\tiny $\begin{array}{l}12\\\\13\end{array}$}
(0,-14) node[and port,fill=cyan!30!white] (myand8) {\tiny $\begin{array}{l}14\\\\15\end{array}$}
(0,-16) node[and port,fill=cyan!30!white] (myand9) {\tiny $\begin{array}{l}16\\\\17\end{array}$}
(0,-18) node[and port,fill=cyan!30!white] (myand10) {\tiny $\begin{array}{l}18\\\\19\end{array}$}
(0,-20) node[and port,fill=cyan!30!white] (myand11) {\tiny $\begin{array}{l}20\\\\21\end{array}$}
(0,-22) node[and port,fill=cyan!30!white] (myand12) {\tiny $\begin{array}{l}22\\\\23\end{array}$}
(3,-1) node[or port,fill=red!20!white] (myor1) {\tiny $24$}
(3,-5) node[or port,fill=red!20!white] (myor2) {\tiny $25$}
(3,-9) node[or port,fill=red!20!white] (myor3) {\tiny $26$}
(3,-13) node[or port,fill=red!20!white] (myor4) {\tiny $27$}
(3,-17) node[or port,fill=red!20!white] (myor5) {\tiny $28$}
(3,-21) node[or port,fill=red!20!white] (myor6) {\tiny $29$}
(7,-3) node[and port,fill=cyan!20!white] (myand13) {}
(7,-11) node[and port,fill=cyan!20!white] (myand14) {}
(7,-19) node[and port,fill=cyan!20!white] (myand15) {}
(12,-11) node[or port,number inputs=3,fill=red!20!white] (myor7) {\tiny $30$}
(myand1.in 1) node[anchor=east] {$\neg X_1$}
(myand1.in 2) node[anchor=east] {$\neg X_2$}
(myand2.in 1) node[anchor=east] {$X_1$}
(myand2.in 2) node[anchor=east] {$\bot$}
(myand3.in 1) node[anchor=east] {$\neg X_3$}
(myand3.in 2) node[anchor=east] {$X_4$}
(myand4.in 1) node[anchor=east] {$X_3$}
(myand4.in 2) node[anchor=east] {$\theta_{6}:\top$}
(myand5.in 1) node[anchor=east] {$X_1$}
(myand5.in 2) node[anchor=east] {$\neg X_2$}
(myand6.in 1) node[anchor=east] {$\neg X_1$}
(myand6.in 2) node[anchor=east] {$X_2$}
(myand7.in 1) node[anchor=east] {$X_3$}
(myand7.in 2) node[anchor=east] {$\neg X_4$}
(myand8.in 1) node[anchor=east] {$\neg X_3$}
(myand8.in 2) node[anchor=east] {$\theta_{7}:\top$}
(myand9.in 1) node[anchor=east] {$X_1$}
(myand9.in 2) node[anchor=east] {$X_2$}
(myand10.in 1) node[anchor=east] {$\neg X_1$}
(myand10.in 2) node[anchor=east] {$\bot$}
(myand11.in 1) node[anchor=east] {$\neg X_3$}
(myand11.in 2) node[anchor=east] {$\neg X_4$}
(myand12.in 1) node[anchor=east] {$X_3$}
(myand12.in 2) node[anchor=east] {$\bot$}
(myand1.out) -- (myor1.in 1) node[midway,above]{$1$}
(myand2.out) -- (myor1.in 2) node[midway,above]{$0$}
(myand3.out) -- (myor2.in 1) node[midway,above]{$\theta_{3}$}
(myand4.out) -- (myor2.in 2) node[midway,above=0.1cm]{$1-\theta_{3}$}
(myand5.out) -- (myor3.in 1) node[midway,above]{$\theta_{4}$}
(myand6.out) -- (myor3.in 2) node[midway,above=0.1cm]{$1-\theta_{4}$}
(myand7.out) -- (myor4.in 1) node[midway,above]{$\theta_{5}$}
(myand8.out) -- (myor4.in 2) node[midway,above=0.1cm]{$1-\theta_{5}$}
(myand9.out) -- (myor5.in 1) node[midway,above]{$1$}
(myand10.out) -- (myor5.in 2) node[midway,above]{$0$}
(myand11.out) -- (myor6.in 1) node[midway,above]{$1$}
(myand12.out) -- (myor6.in 2) node[midway,above]{$0$}
(myor1.out) -- (myand13.in 1) 
(myor2.out) -- (myand13.in 2) 
(myor3.out) -- (myand14.in 1)
(myor4.out) -- (myand14.in 2)
(myor5.out) -- (myand15.in 1)
(myor6.out) -- (myand15.in 2)
(myand13.out) -- (myor7.in 1) node[midway,above=1cm]{$\theta_1$}
(myand14.out) -- (myor7.in 2) node[midway,above]{$\theta_2$}
(myand15.out) -- (myor7.in 3) node[midway,left=.2cm]{$1-\theta_1-\theta_2$};
\end{tikzpicture}
\caption{A probabilistic sentential decision diagrams (PSDD) over four Boolean variables\label{fig:sdd}. The corresponding sentential decision diagram (SDD) is the underlying logic circuit when the probabilistic annotations of the PSDD are not considered.}
\end{figure}

Consider a data set of observations for the permitted
configurations is available, where each configuration occurs
with the counts $n_0,\dotsc,n_9$, as indicated on the top of the squares in Figure
\ref{fig:squares} for the top row.
Say that we want to learn
from these data a generative model, that is, a joint
probability mass function over the four variables.
Such a mass function should be also consistent with the
logical constraints, that is, the six impossible configurations should receive zero probability.

As the sub-formulae associated to the three inputs of the
disjunctive gate in the circuit output are disjoint, a joint
mass function consistent with $\phi$ could be simply $\theta_1
I_{\phi_1} + \theta_2 I_{\phi_2} + (1-\theta_1-\theta_2)
I_{\phi_3}$, where $\phi_i$ is the formula associated with the
$i$-th input of the gate for each $i=1,2,3$, and $I$ denotes
the indicator function of the formula in its subscript. For
each $i=1,2,3$, the parameter $\theta_i$ is therefore the
probability of $\phi_i$, that can be estimated from the data.
For example, a maximum likelihood estimator would give $\theta_1=\frac{n_2+n_6+n_9}{n}$ and $\theta_2=\frac{n_0 + n_1 + n_4 + n_5 + n_7+ n_8}{n}$ where $n=\sum_{i=0}^9 n_i$.

More refined joint mass functions can be obtained by a recursive application of this approach to the other disjunctive gates and multiplying the contributions associated with the inputs of a conjunctive gate. In those cases the parameters should be intended as conditional probabilities for the corresponding sub-formula given by a so called  \emph{context}.\footnote{Roughly, a context of a node in the circuit is the formula determined by the path leading to it and such that, joint with the underlying SDD, implies the formula associated to the node. A formal statement is given in Definition \ref{def:context}.}

Finally, for the circuit inputs, we specify indicator functions of their literals, these being replaced by a zero for \emph{bots} (i.e., $\bot$), and by a probability mass function $\theta I_X+(1-\theta) I_{\neg X}$ for a \emph{top} (i.e., $\top$) associated with variable $X$ and annotated with a probability $\theta$. Accordingly, the annotated circuit in Figure \ref{fig:sdd} induces the joint probability mass function:
\begin{multline}\label{eq:indicators}
\theta_1 \cdot \left[I_{\neg X_1} I_{\neg X_2} \right]\cdot
\left[ \theta_{3} 
I_{\neg X_3} I_{ X_4} + (1-\theta_{3}) I_{X_3}
\left[ \theta_6 I_{X_4}+\left(1-\theta_6\right)I_{\neg X_4}\right]\right] +\\
+\theta_2 \cdot 
\left[\theta_{4} I_{X_1} I_{\neg X_2} + (1-\theta_{4}) I_{\neg X_1} I_{X_2}\right] \cdot \\ 
\cdot \left[\theta_{5} I_{X_3} I_{\neg X_4} + (1-\theta_{5}) I_{\neg X_3}\left[\theta_7 I_{X_4} + (1-\theta_7) I_{\neg X_4}\right] \right] +\\ 
+(1-\theta_1-\theta_2) \cdot \left[I_{X_1}I_{X_2} \right] \cdot \left[ I_{\neg X_3} I_{\neg X_4} \right]\,,
\end{multline}
where the variables of the indicator functions are left
implicit for the sake of readability. An annotated circuit as
that in Figure \ref{fig:sdd}, defining a generative model as
the one in Equation \eqref{eq:indicators}, which is consistent
with the formula $\gamma$ in Equation \eqref{eq:sdd_squares}, is
called a \emph{probabilistic sentential decision diagram}
\cite{kisa2014}.

In this paper we are interested in developing algorithms for
sensitivity analysis of the inferences in these models with
respect to the parameters. This is important when only few training data are available and sharp estimates of the parameters might be not reliable. Moreover, the parameters not associated with the output
disjunctive gate are conditional probabilities and the closer
the parameter is to the input, the higher will be the number
of variables involved in the conditioning event. Thus, in deep
circuits, we might have very few training data to learn those
parameters even if the available training data set is huge,
thus making important the development of tools for sensitivity
analysis. The notion of probabilistic sentential decision diagrams, together with other background concepts, are formally described in the next section.
\section{Background}\label{sec:back}
\subsection{Credal Sets}\label{sec:cs}
Consider a variable $X$ taking its values in a finite set
$\mathcal{X}$ whose generic element is denoted as $x$. A
\emph{probability mass function} (PMF) over $X$, denoted as
$\mathbb{P}(X)$, is a real-valued non-negative function
$\mathbb{P}:\mathcal{X} \to \mathbb{R}$ such that
$\sum_{x\in\mathcal{X}} \mathbb{P}(x)=1$. Given a function $f$
of $X$, the \emph{expectation} of $f$ with respect to a PMF
$\mathbb{P}$ is $\mathbb{P}[f]:=\sum_{x\in\mathcal{X}} f(x)
\cdot \mathbb{P}(x)$. A set of PMFs over $X$ is called
\emph{credal set} (CS) and denoted as $\mathbb{K}(X)$. Here we
consider CSs induced by a finite number of linear constraints.
Given CS $\mathbb{K}(X)$, the bounds of the  expectation with
respect to $\mathbb{K}(X)$ can be computed by optimizing 
$\mathbb{P}[f]$ over $\mathbb{K}(X)$. For example, for the lower bound, $\underline{\mathbb{P}}[f]:= \min_{\mathbb{P}(X)\in \mathbb{K}(X)} \sum_{x\in\mathcal{X}} f(x) \cdot \mathbb{P}(x)$. This is a linear programming task, whose optimum remains the same after replacing $\mathbb{K}(X)$ with its convex hull. Such optimum is attained on an extreme point of the convex closure. Moreover, if $f$ is an indicator function, the lower expectation is called \emph{lower} probability. Notation $\overline{\mathbb{P}}$ is used instead for the upper bounds and duality $\overline{\mathbb{P}}(f)=-\underline{\mathbb{P}}(-f)$ holds.

In the special case of Boolean variables it is easy to see that the number of extreme points of the convex closure of a CS cannot be more than two, and the specification of a single interval constraint, say $0\leq l\leq\mathbb{P}(x)\leq u\leq 1$ for one of the two states is a fully general CS specification.

Learning CSs from multinomial data can be done by the \emph{imprecise Dirichlet model} (IDM) \cite{walley1996inferences}. This is a generalised Bayesian approach in which a single Dirichlet prior with equivalent sample size $s$ is replaced by the set of all the Dirichlet priors with this size. The corresponding bounds on the probabilities are
\begin{equation}\label{eq:idm}
\mathbb{P}(x) \in \left[\frac{n(x)}{N+s},\frac{n(x)+s}{N+s}\right] 
\end{equation}
where $n(x)$ are the number of instances of the data set, whose total size is $N$, such that $X=x$, for each $x\in\mathcal{X}$.

Given PMF $\mathbb{P}(X_1,X_2)$, $X_1$ and $X_2$ are \emph{stochastically independent} if and only if $\mathbb{P}(x_1,x_2)=\mathbb{P}(x_1) \cdot \mathbb{P}(x_2)$ for each $x_1\in\mathcal{X}_1$ and $x_2 \in \mathcal{X}_2$. We similarly say that, given CS $\mathbb{K}(X_1,X_2)$, $X_1$ and $X_2$ are \emph{strongly independent} if and only if stochastic independence is satisfied for each extreme point of the convex closure of the joint CS.

\subsection{Sentential Decision Diagrams}\label{sec:sdd}
Give a finite set of Boolean variables $\bm{X}$, a \emph{literal} is either a Boolean variable $X \in \bm{X}$ or its negation $\lnot X$. The Boolean constant always taking the value false or true are denoted, respectively, as $\bot$ and $\top$.

We start by defining a generalisation of orders on variables based on the following definition.

\begin{definition}[Vtree]
Consider a finite set $\bo{X}$ of Boolean variables. A
\emph{vtree} for $\bo{X}$ is a full binary tree $v$ whose
leaves are in one-to-one correspondence with the elements of
$\bo{X}$. We denote by $v^l$ (resp., $v^r$) the left (right)
subtree of $v$, i.e., the vtree rooted at the left (resp., right) child of the root of $v$.
\end{definition}

Two vtrees for the variables in the example in Section \ref{sec:example} are in Figure \ref{fig:vtree}. Note that the in-order tree traversal of a vtree induces a total order on the variables, but two distinct vtrees can induce in this way the same order (e.g., the two vtrees in Figure \ref{fig:vtree}).

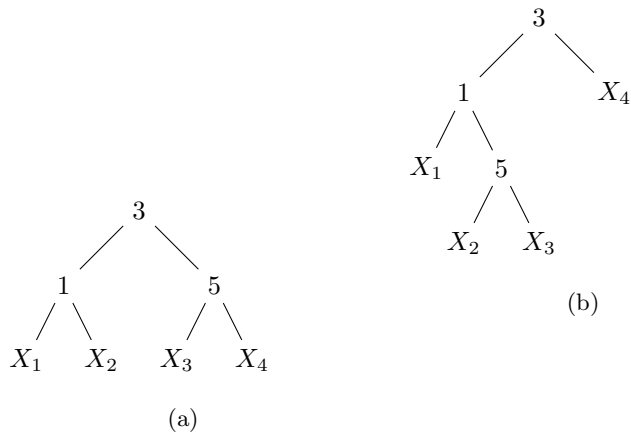
\begin{figure}[htp]\centering
\begin{subfigure}[a]{0.4\textwidth}
\begin{tikzpicture}[level distance=1cm,level 1/.style={sibling distance=2cm},level 2/.style={sibling distance=1cm}]
\node {$3$}child {node {$1$} child {node {$X_1$}}
child {node {$X_2$}}} child {node {$5$}
child {node {$X_3$}} child {node {$X_4$}}};
\end{tikzpicture}
\caption{\label{fig:vtree1}}
\end{subfigure}
\quad
\begin{subfigure}[b]{0.4\textwidth}
\begin{tikzpicture}[level distance=1cm,
level 1/.style={sibling distance=2cm},
level 2/.style={sibling distance=1cm}]
\node {$3$} child {node {$1$} child {node {$X_1$}} child { node {$5$} child {node {$X_2$}} child {node {$X_3$}}}} child {node {$X_4$}};
\end{tikzpicture}
\caption{\label{fig:vtree2}}
\end{subfigure}
\caption{Two vtrees over four variables}\label{fig:vtree}
\end{figure}

Based on the notion of vtree, we provide the following
definition of SDDs.

\begin{definition}[SDD]
A \emph{sentential decision diagram} (SDD) $\alpha$ \emph{normalised for vtree} $v$ and its interpretation $\interp{\alpha}$ are defined inductively as follows.
\begin{itemize}
\item  If $v$ is a leaf, let $X$ be the variable attached to
  $v$; then $\alpha$ is either a \emph{constant}, i.e., $\alpha \in \{ \bot,  \top \}$,  or a \emph{literal}, i.e., $\alpha \in \{X, \neg X \}$.
\item If $v$ is not a leaf, then  $\alpha = \{ (p_i,s_i)\}_{i=1}^k$, where the $p_i$'s and $s_i$'s, called \emph{primes} and \emph{subs}, are SDDs normalised for $v^l$ and $v^r$ respectively. 
\end{itemize}

The interpretation of an SDD $\alpha$  normalised for $v$, denoted as $\interp{\alpha}$, is a propositional sentence over the variables of $v$, defined as follows:
\begin{itemize}
\item If $\alpha \in \{\bot, \top, X, \neg X \}$: $\interp{\bot}= \bot$, $\interp{\top}= \top$ and $\interp{X} = X$, $\interp{\neg X} = \neg X$.
\item If $\alpha = \{ (p_i,s_i)\}_{i=1}^k $,
$\interp{\alpha}= \mathop{\bigvee}_{i=1}^k \interp{p_i } \wedge \interp{s_i}$ and interpretations $\{\interp{p_i}\}_{i=1}^k$ form a partition.
\end{itemize}
\end{definition}

The \emph{sub-SDDs} of an SDD $\alpha$ are $\alpha$ itself,
its primes, its subs, and the sub-SDDs of its primes and subs.
A sub-SDD will be often called a \emph{node}, more precisely a
\emph{terminal node} when it is normalized for a leaf, and a \emph{decision node} otherwise. 

In a decision node $\{ (p_i,s_i)\}_{i=1}^k$,  the pairs $(p_i, s_i)$'s are called the \emph{elements} of the node, and $k$ is its \emph{size}. The  size of an SDD is the sum of the sizes of all its decision nodes.\footnote{The size of an SDD depends on the number of variables, the base knowledge and the choice of the vtree. The notion of \emph{nicety} for vtrees with respect to a given formula provides a bound on the SDD size \cite{darwiche_sdd}. Yet, the existence of a nice vtree is guaranteed for CNFs only.}

At the interpretation level, each decision node represents a disjunction (actually, an exclusive disjunction, as the primes form a partition), while each of its elements is a conjunction between a prime and a sub.

\begin{example}\label{ex:1}
  Given the vtree $v$ over the ordered pair of variables $(A,P)$,
  $\alpha=\{ (A, P) , (\lnot A ,\top) \}$ is an SDD normalized
  for $v$; the interpretation of $\alpha$ is $\interp{\alpha}= (A \land
  P) \lor (\lnot A \land \top)$, which is logically equivalent to $\phi=A\rightarrow P$.
\end{example}

Given the previous discussion, 
 we can intend the SDD $\alpha$ as a rooted logic circuit, like the one in Figure \ref{fig:sdd}, providing a representation of the formula $\langle \alpha \rangle$. The labels on decision nodes denote the vtree nodes for which the decision node is normalized.

The following definition makes formal the notion of path in an SDD. This is needed to provide a semantics for the parameters used to annotate SDDs.

\begin{definition}[Context]\label{def:context} Let $n$ be a node (either terminal or decision) of an SDD. Denote as $(p_1,s_1), \ldots, (p_l,s_l)$ a path from the root to node $n$. Then the conjunction of the interpretations of the primes encountered in this path, i.e., $\langle p_1\rangle \wedge \dots \wedge \langle p_l \rangle$, is called a \emph{context} of $n$ and denoted as $\gamma_n$. The context $\gamma_n$ is \emph{feasible} if and only if $s_i \neq \bot$ for each $i=1,\ldots,l$.
\end{definition}

By construction, each node has at least one context. The number of contexts of a node defines its \textit{multiplicity} as follows.

\begin{definition}\label{def:topology} 
The multiplicity of an SDD node is the number of its contexts. An SDD is \textit{singly connected} if all of its nodes have multiplicity equal to one. Otherwise, it is \textit{multiply connected}.
\end{definition}

Notice that, at the circuit level, the definition of multiply connected SDD coincides with the graph-theoretical one.

\begin{figure}
\centering
\begin{tikzpicture}[scale=.7]
\draw
(-1,4) node[and port,fill=cyan!30!white] (myand2) {\tiny $\begin{array}{l}0\\\\1\end{array}$}
(-1,-1) node[and port,fill=cyan!30!white] (myand1) {\tiny $\begin{array}{l}2\\\\3\end{array}$}
(1,1) node[or port,fill=red!20!white] (myor1) {\tiny $4$}
(4,-2) node[and port,fill=cyan!30!white] (myand3)  {\tiny $\begin{array}{l}9\\\\10\end{array}$}
(5,1) node[and port,fill=cyan!30!white] (myand4) {\tiny $\begin{array}{l}8\\\\\phantom{a}\end{array}$}
(5,3) node[and port,fill=cyan!30!white] (myand5)  {\tiny $\begin{array}{l}6\\\\7\end{array}$}
(4,6) node[and port,fill=cyan!30!white] (myand6)  {\tiny $\begin{array}{l}5\\\\\phantom{a}\end{array}$}
(7,0) node[or port,fill=red!20!white] (myor2) {\tiny $13$}
(7,+5) node[or port,fill=red!20!white] (myor3) {\tiny $11$}
(10,0) node[and port,fill=cyan!30!white] (myand7)  {\tiny $\begin{array}{l}\phantom{a}\\\\14\end{array}$}
(10,4) node[and port,fill=cyan!30!white] (myand8)  {\tiny $\begin{array}{l}\phantom{a}\\\\12\end{array}$}
(13,2) node[or port,fill=red!20!white] (myor4) {\tiny$15$}
(myand1.in 1) node[anchor=east] {$X_2$}
(myand1.in 2) node[anchor=east] {$\theta_1:\top$}
(myand2.in 1) node[anchor=east] {$\neg X_2$}
(myand2.in 2) node[anchor=east] {$\bot$}
(myand3.in 1) node[anchor=east] {$X_1$}
(myand3.in 2) node[anchor=east] {$\bot$}
(myand4.in 1) node[anchor=east] {$\neg X_1$}
(myand5.in 1) node[anchor=east] {$\neg X_1$}
(myand6.in 1) node[anchor=east] {$X_1$}
(myand5.in 2) node[anchor=east] {$\bot$}
(myand7.in 2) node[anchor=east] {$\neg X_4$}
(myand8.in 2) node[anchor=east] {$X_4$}
(myand1.out) -- (myor1.in 2) node[midway,left=0.1cm]{$1$}
(myand2.out) -- (myor1.in 1) node[midway,below left=0.1cm]{$0$}
(myor1.out) -- (myand6.in 2) node[midway,above]{}
(myor1.out) -- (myand4.in 2) node[midway,above]{}
(myand3.out) -- (myor2.in 2) node[midway,above left=0.1cm]{$0$}
(myand4.out) -- (myor2.in 1) node[midway,above=0.2cm]{$1$}
(myand5.out) -- (myor3.in 2) node[midway,above left=0.1cm]{$0$}
(myand6.out) -- (myor3.in 1) node[midway,above=0.2cm]{$1$}
(myor2.out) -- (myand7.in 1) node[midway,above]{}
(myor3.out) -- (myand8.in 1) node[midway,above]{}
(myand7.out) -- (myor4.in 2) node[midway,below right=0.2cm]{$1-\theta_4$}
(myand8.out) -- (myor4.in 1) node[midway,above=0.2cm]{$\theta_4$};
\end{tikzpicture}
\caption{A PSDD whose underlying SDD is multiply connected,  normalized for the second vtree in Figure \ref{fig:vtree}, and represents formula $\phi = (X_1\wedge X_2 \wedge X_4)\vee (\neg X_1 \wedge X_2 \wedge \neg X_4)$.}
\label{fig:multi}
\end{figure}
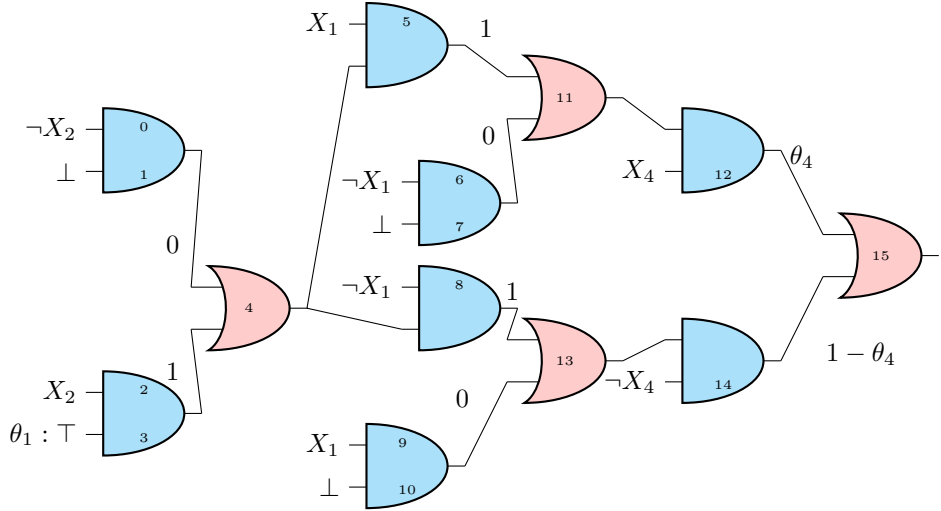

\begin{example}
Consider SDD in Figure \ref{fig:multi}. The terminal node with
label $12$ has multiplicity one and its context is $\gamma =
X_1 \wedge X_2$. The decision node with label $4$ (in pink in
the figure) has multiplicity two and its contexts are $\gamma'= ((X_1 \land X_2) \land X_1) = X_1\wedge X_2$ and $\gamma'':= ((\neg X_1 \land X_2) \land \neg X_1) = \neg X_1\wedge X_2$.
\end{example}

The interpretation of a node is implied by its contexts and by
the interpretation of the SDD it belongs to, that is,
for each node $n$ of an SDD $\alpha$, for any  context
$\gamma_n$, we have that $\interp{\alpha} \wedge  \gamma_n  \models \interp{n}$.

Let us finally define a notion of topological order for the nodes of an SDD. The logic circuit underlying the SDD can be regarded as a directed graph whose arcs are oriented from the inputs to the outputs. Yet, an order in the circuit does not induce a complete order over the SDD nodes as the conjunctive gates corresponds to pairs or nodes (i.e., elements). Nevertheless, to obtain a complete order we might simply force both the nodes of an element to precede their decision node, while the terminal nodes are clearly preceding all the decision nodes.

\subsection{Probabilistic Sentential Decision Diagrams}\label{sec:psdd}
A \emph{probabilistic sentential decision diagram} is a parametrized SDD, where parameters are PMFs specifications on the decision nodes and on the terminal nodes labeled with  constant top. A PSDD induces a joint PMF over its variables, assigning zero probability to the impossible states of the logical constraint given by the interpretation of the underlying SDD.

To turn an SDD into a PSDD, proceed as follows. For each
terminal node $\top$, specify a positive parameter $\theta$
such that $0\leq \theta \leq 1$. Notation for such terminal
node is $X : \theta$, where $X$ is the variable of the leaf
vtree node for which $\top$ is normalised. Terminal nodes
other than $\top$ appear as they are; for each decision node
$\{ (p_i,s_i)\}_{i=1}^k$, specify for each prime $p_i$ a real
number $\theta_i \geq 0$, such that $\sum_{i=1}^k\theta_i=1$
and $\theta_i = 0$ if and only if $s_i=\bot$. Notation
$\{(p_i,s_i, \theta_i)\}_{i=1}^k$ is used to denote such a parametrisation.
The interpretation of such parametrisation is the following.
Each node $n\neq \bot$ normalized for vtree node $v$ induces a
PMF $\mathbb{P}_n$ defined inductively as follows:

\begin{itemize}
\item if $n$ is a terminal node whose corresponding variable
  in $v$ is $X$, then $\mathbb{P}_n$ is a PMF over $\{\top,
  \bot \}$ such that:

\begin{itemize}
\item if $n=X$, $\mathbb{P}_n(\top)=1$ and $\mathbb{P}_n(\bot)=0$
\item if $n=\neg X$, $\mathbb{P}_n(\top)=0$ and $\mathbb{P}_n(\bot)=1$
\item if $n= X : \theta$, $\mathbb{P}_n(\top)=\theta$ and $\mathbb{P}_n(\bot)=1-\theta$
\end{itemize}

\item if $n=\{(p_i,s_i, \theta_i)\}_{i=1}^k$ is a decision node, let $(\bm{X},\bm{Y})$ be the variables of $v^l$, $v^r$ respectively. Then the joint PMF $\mathbb{P}_n(\bm{X},\bm{Y})$ is defined as:
\begin{equation}
\mathbb{P}_n(\bm{x},\bm{y}):=  \mathbb{P}_{p_i}(\bm{x})\cdot  \mathbb{P}_{s_i}(\bm{y})\cdot \theta_i\,,
\end{equation}
for each $(\bm{x},\bm{y}) \in \bm{\mathcal{X}} \times \bm{\mathcal{Y}}$, where $i$ is the unique index such that $\bo{x}\models \interp{p_i}$.
\end{itemize}

In other words, PSDDs are SDDs with PMFs associated to each node distinct from $\bot$. It follows that sub-SDDs of a PSDD are in fact sub-PSDDs, except for terminal nodes $\bot$ (because such nodes do not induce a PMF). According to the \emph{Base Theorem} for PSDDs \cite[Theorem 1]{kisa2014}, the PMF $\pn$ assigns zero probability to events which do not respect the propositional sentence associated to the SDD $n$. More precisely, for any instantiation $(\bm{x},\bm{y})$ of variables $(\bm{X},\bm{Y})$ of the vtree $n$ is normalised for, $\pn(\bm{x},\bm{y})> 0$ iff $(\bm{x},\bm{y})\models \interp{n}$. Moreover, the probabilities $\pn(\interp{p_i})$ are the parameters $\theta_i$'s of $n = \{(p_i,s_i, \theta_i)\}_{i=1}^k$.

We simply denote as $\mathbb{P}$ the (joint) PMF induced by the root $r$. PMF $\pn$ induced by an internal node can be obtained by conditioning $\mathbb{P}$ on a feasible context of the considered node \cite[Theorem 4]{kisa2014}: for each feasible context $\gamma_n$ of $n$, $\pn(\cdot) = \mathbb{P}(\cdot \vert \gamma_n)$. The topological definitions made for SDDs extend to PSDDs. Finally, we have the following result about independence \cite[Theorem 5]{kisa2014}: according to $\mathbb{P}$, the variables inside $v$ are independent of those outside $v$ given context $\gamma_n$. This is the PSDD analogue of the \emph{Markov condition} for Bayesian networks.


\subsection{Inferences in PSDDs}
PSDD inferences are computed with respect to the joint PMF
$\mathbb{P}$. The probability of a joint state $\bm{e}$ of a
set of PSDD variables $\bm{E}$ can be obtained in linear time
with respect to the diagram size by the bottom-up (i.e., based
on a topological order from the inputs to the output) scheme in Algorithm~\ref{alg:kisa}. Note that here and in the rest of the paper we assume that the nodes of the PSDD are labeled by integers from one to $N$ following a topological order and $N$ is therefore the output/root of the circuit. Given a vtree node $v$, notation $\bo{e}_v$ is used for the subset of $\bo{e}$ including only the variables of $v$. Note also that, as the node index $n$ in the loop follows a topological order, the \emph{message} $\pi(n)$, to be computed after the \emph{else} statement, is always a combination of messages already computed.

\begin{algorithm}[htp!]
    \caption{Probability of evidence \cite{kisa2014}}
    \label{alg:kisa}
    \begin{algorithmic}
        \STATE {\bf input:} PSDD, {evidence $\mathbf{e}$}
        \FOR{$n \leftarrow 1,\ldots,N$ (topological order)}
        \STATE{ $\pi(n) \leftarrow 0$}
        \IF{node $n$ is terminal, $n\neq \bot$}
        \STATE{$v\leftarrow $ leaf vtree node that $n$ is normalized for}
        \STATE{$\pi(n)\leftarrow \mathbb{P}_n(\mathbf{e}_v)$}
        \ELSE
        \STATE{ ${(p_i,s_i,\theta_i)}_{i=1}^k \leftarrow n$ (decision node)}
        \STATE{ $\pi(n)\leftarrow  \sum_{i=1}^k \pi(p_i) \cdot \pi(s_i) \cdot \theta_i$}
        \ENDIF
        \ENDFOR
        \STATE {\bf output:} $\mathbb{P}(\mathbf{e})\leftarrow \pi(N)$
    \end{algorithmic}
\end{algorithm}

The computation of a conditional query is based on a similar
strategy.

Regarding MAP inference, that is, the problem of finding the
most probable configuration for a set of variables given an
observation of the other ones,
the computation proceeds very similarly, replacing the sums
with maximizations  \cite{bekker2015tractable}. More formally,
given a
PSDD rooted at $r$, and evidence $\bm{e}$ for the variables in
$\bm{E}$, we are interested in finding
$\bm{x}^*:=\arg\max_{\bm{x}\in\bm{\mathcal{X}}}
\mathbb{P}_r(\bm{x}|\bm{e})$ for the PSDD variables other than
$\bm{E}$ and denoted as $\bm{X}$.
We assume the evidence consistent with the PSDD logical
constraints and hence $\mathbb{P}_r(\bm{e})>0$. This way, the
task is well-defined and it is equivalent to the maximization
of the joint, that is,
\begin{equation}\label{eq:map}
\bm{x}^* = \arg\max_{\bm{x}\in\bm{\mathcal{X}}} \mathbb{P}_r(\bm{x},\bm{e})\,.
\end{equation}

Algorithm \ref{alg:map} takes as input a PSDD rooted at $r$ over variables $\{\bo{X}, \bo{E}\}$ (with $\bo{X}$ and $\bo{E}$ disjoint) and evidence $\bo{e}$ over variables $\bo{E}$, and computes $\mathop{\max}_{\bo{x}\in \bm{\mathcal{X}}}\mathbb{P}_r(\bo{x},\bm{e})$.
Correctness is implied by the following result.
\begin{theorem}\label{theo:map}
The output of Algorithm \ref{alg:map} is the probability of
the configuration of Equation \eqref{eq:map}, that is,
\begin{equation}
MAP(r) = \mathbb{P}_r(\bm{x}^*,\bm{e})\,.
\end{equation}
\end{theorem}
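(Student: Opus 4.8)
The plan is to prove a slightly stronger statement by structural induction on the PSDD rooted at $r$ — equivalently, by induction along the topological order that Algorithm~\ref{alg:map} follows. Write $MAP(n)$ for the message the algorithm stores at node $n$ (at a terminal node for an observed variable it is $\mathbb{P}_n$ evaluated at the observed value; at a terminal node for a queried variable it is $\max_{z\in\{\top,\bot\}}\mathbb{P}_n(z)$; at a decision node $n=\{(p_i,s_i,\theta_i)\}_{i=1}^k$ it is $\max_{i=1}^k MAP(p_i)\,MAP(s_i)\,\theta_i$, i.e.\ Algorithm~\ref{alg:kisa} with maximisation in place of summation). The claim I would establish is: for every node $n\neq\bot$ normalised for a vtree node $v$, with $\bm{Z}$ the variables of $v$ partitioned into queried variables $\bm{Z}_{\bm{X}}:=\bm{Z}\cap\bm{X}$ and observed variables $\bm{Z}_{\bm{E}}:=\bm{Z}\cap\bm{E}$, and with $\bm{e}_v$ the restriction of $\bm{e}$ to $\bm{Z}_{\bm{E}}$,
\begin{equation*}
MAP(n) \;=\; \max_{\bm{z}_{\bm{X}}} \mathbb{P}_n(\bm{z}_{\bm{X}}, \bm{e}_v)\,.
\end{equation*}
Specialising to $n=r$ gives $\bm{Z}_{\bm{X}}=\bm{X}$ and $\bm{e}_v=\bm{e}$, hence $MAP(r)=\max_{\bm{x}\in\bm{\mathcal{X}}}\mathbb{P}_r(\bm{x},\bm{e})=\mathbb{P}_r(\bm{x}^*,\bm{e})$ by the reformulation in \eqref{eq:map}, which is the assertion of the theorem.

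For the base case, let $n$ be a terminal node for the leaf vtree node of a variable $X$, so $\bm{Z}=\{X\}$: if $X\in\bm{E}$ there is no maximisation and $MAP(n)=\mathbb{P}_n(\bm{e}_v)$ directly; if $X\in\bm{X}$ then $MAP(n)=\max_{z\in\{\top,\bot\}}\mathbb{P}_n(z)$; in both cases the claim is immediate from the definition of $\mathbb{P}_n$ on $\{\top,\bot\}$ recalled in Section~\ref{sec:psdd}. For the inductive step let $n=\{(p_i,s_i,\theta_i)\}_{i=1}^k$ be normalised for $v$, with $\bm{U}$ and $\bm{W}$ the variables of $v^l$ and $v^r$; the inductive hypothesis applies to each $p_i$ and each $s_i\neq\bot$. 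Splitting the queried variables of $v$ as $\bm{Q}^l\subseteq\bm{U}$, $\bm{Q}^r\subseteq\bm{W}$ and writing $\bm{e}_v=(\bm{e}_{v^l},\bm{e}_{v^r})$, I would substitute the decision-node semantics $\mathbb{P}_n(\bm{u},\bm{w})=\mathbb{P}_{p_i}(\bm{u})\cdot\mathbb{P}_{s_i}(\bm{w})\cdot\theta_i$ (with $i$ the unique index whose prime is modelled by the $v^l$-assignment) into $\max_{\bm{q}^l,\bm{q}^r}\mathbb{P}_n(\bm{q}^l,\bm{e}_{v^l},\bm{q}^r,\bm{e}_{v^r})$. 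Since the selected index depends only on $(\bm{q}^l,\bm{e}_{v^l})$, once $\bm{q}^l$ is fixed the maximisation over $\bm{q}^r$ detaches and acts only on the $\mathbb{P}_{s_i}$ factor; grouping the remaining maximisation over $\bm{q}^l$ by which prime it satisfies, and using that the primes form a partition and $\mathbb{P}_{p_i}$ vanishes off the models of $\interp{p_i}$ (Base Theorem for PSDDs, recalled in Section~\ref{sec:psdd}), the expression collapses to $\max_{i=1}^k\big(\max_{\bm{q}^l}\mathbb{P}_{p_i}(\bm{q}^l,\bm{e}_{v^l})\big)\cdot\big(\max_{\bm{q}^r}\mathbb{P}_{s_i}(\bm{q}^r,\bm{e}_{v^r})\big)\cdot\theta_i$, which by the inductive hypothesis is exactly $\max_{i=1}^k MAP(p_i)\,MAP(s_i)\,\theta_i=MAP(n)$.

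I expect the only delicate point to be the bookkeeping in the inductive step: the branch index that ``fires'' is a function of the left assignment, and that assignment mixes queried with observed variables, so one must argue that the maximisations over $\bm{q}^l$ and over $\bm{q}^r$ can be carried out independently within each branch, and that a branch whose prime is incompatible with $\bm{e}_{v^l}$ contributes only a zero message and is harmlessly dominated (unless every branch is zero, in which case both sides of the identity are zero and $\mathbb{P}_r(\bm{e})=0$, which is excluded by assumption at the root). The partition property of primes together with the Base Theorem is precisely what licenses replacing the consistency-restricted inner maxima by the unrestricted ones the algorithm actually computes. Everything else — the terminal base case and the final instantiation at the root — is routine.
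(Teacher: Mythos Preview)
Your proposal is correct and follows essentially the same approach as the paper: structural induction on the PSDD, using that primes partition so the decision-node sum collapses to a single term, then separating the maximisation over left and right queried variables within each branch. Your version is more explicit about the induction hypothesis (stating it for every node $n$) and more careful about the bookkeeping and edge cases than the paper's proof, but the underlying argument is the same.
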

Finally, the arguments realizing the maximum may be obtained by backtracking the solutions of the maximizations.
\begin{algorithm}[htp!]
    \caption{MAP}
    \label{alg:map}
    \begin{algorithmic}
        \STATE {\bo{input}:} PSDD $r$, {evidence $\mathbf{e}$}
        \FOR{$n \leftarrow 1,\ldots,N$}
        \STATE{ $MAP(n) \leftarrow 0$}
        \IF{node $n$ is terminal, $n\neq \bot$}
        \STATE{$v\leftarrow $ leaf vtree node that $n$ is normalized for}
        \IF{$var(v)\in \bo{X}$}
        \IF{$n \in \{ X, \neg X \}$} 
        \STATE{$MAP(n)\leftarrow 1$}
        \ELSIF{$n= (\top, \theta)$}
        \STATE{$MAP(n)\leftarrow \max\{\theta, 1-\theta\}$}
        \ENDIF
        \ELSE
        \STATE{$MAP(n)\leftarrow \mathbb{P}_n(\bo{e})$}
        \ENDIF
        \ELSE
        \STATE{ ${(p_i,s_i,\theta_i)}_{i=1}^k \leftarrow n$ (decision node)}
        \STATE{ $MAP(n)\leftarrow  \max_{i=1}^k MAP(p_i) \cdot MAP(s_i) \cdot \theta_i$}
        \ENDIF
        \ENDFOR
        \STATE {\bo{output}:} $MAP(N)$
    \end{algorithmic}
\end{algorithm}

\section{Credal Sentential Decision Diagrams}\label{sec:csdd}
In this section we present a generalization of PSDDs (see Section \ref{sec:psdd}) based on the notion of credal set provided in Section \ref{sec:cs}. The number of variables involved in a node's context increases with the distance from the root when the SDD is singly connected (see Definition \ref{def:context}).  As the PMFs associated with decision nodes specify probabilities conditional on the (unique) corresponding context, the amount of data used to estimate such parameters decreases rapidly with the ``depth'' of the node. In the case of a multiply connected circuit, deepest nodes with high multiplicity generally do not suffer from data scarcity, thanks to their multiple contexts. Nevertheless, data scarcity can affect single-multiplicity nodes in multiply connected circuits, namely when a deep, singly-connected sub-circuit is present. This justifies the need of a robust statistical learning of the parameters as the one provided by the IDM, even when data is initially abundant. This motivates the following definition of CSDDs.

\begin{definition}
A \emph{credal sentential decision diagram} (CSDD) is an SDD augmented as follows.
\begin{itemize}
\item For each terminal node $\top$, an interval $[l, u]$ is provided such that $0<l\leq u<1$. Notation $X:[l,u]$, where $X$ is the variable of the leaf vtree node that $\top$ is normalised for, is consequently adopted. Terminal nodes other than $\top$ appear as they are.
\item For each decision node $n = \{ (p_i,s_i)\}_{i=1}^k$, a CS $\mathbb{K}_n(P)$ is provided over a variable $P$, whose states are the interpretations  $\langle p_i \rangle$ of the primes $p_i$'s of $n$. We require that for all $\mathbb{P}(P) \in \mathbb{K}_n(P)$, for each $1\leq i \leq k$, $\mathbb{P}(\langle p_i \rangle )=0$ if and only if $s_i = \bot$.
\end{itemize}
\end{definition}
According to the above definition, the CSs associated with the decision nodes assign strictly positive (lower) probability to all the states of $P$ apart from those corresponding to a prime whose sub is $\bot$. Similarly, the intervals $[l,u]$ assigned to terminal nodes $\top$ are also CS specifications (see Section \ref{sec:cs}), while literal terminal nodes have attached degenerate CSs containing the single PMF induced by the same literal when regarded as a PSDD node. It follows that sub-SDDs different from $\bot$ (with their CSs) are in fact sub-CSDDs. Thanks to this requirement, it follows that each assignment of the parameters respecting the CSDD constraints defines a \textit{compatible} PSDD. Thus, the interpretation of a CSDD is a collection of PSDDs compatible with its constraints. This also gives a semantics for the CSDD CSs, which are regarded as conditional CSs for the variables/events in the associated nodes given a context.

Exactly as a PSDD defines a joint PMF, a CSDD defines a joint CS. Such a CS, called here the \emph{strong extension} of the CSDD and denoted as $\mathbb{K}^r(\bm{X})$, where $r$ is the root node of the CSDD, is defined as the convex hull of the set of joint PMFs  induced by the collection of its compatible PSDDs. By definition of CSDD strong extension and by the Base Theorem for PSDDs, we have the following result.

\begin{theorem}[Base]\label{theo:base}
For each node $n$ of a CSDD, for each instantiation $\bo{z}$ of its variables $\bo{Z}$, 
\begin{align}
\underline{\mathbb{P}}_n(\bm{z})>0  & \quad\text{iff}\quad  \bm{z}\models \interp{n}\,,\\
\overline{\mathbb{P}}_n(\bm{z})=0 	& \quad\text{iff}\quad  \bm{z}\not\models \interp{n} \,,
\end{align}
where $\underline{\mathbb{P}}_n(\bm{z}) = \min_{\mathbb{P}(\bo{Z}) \in \mathbb{K}^n(\bo{Z})} \mathbb{P}(\bo{z})$ and $\overline{\mathbb{P}}_n(\bm{z}) = \max_{\mathbb{P}(\bo{Z}) \in \mathbb{K}^n(\bo{Z})} \mathbb{P}(\bo{z})$.
\end{theorem}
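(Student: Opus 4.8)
The plan is to reduce the statement for an arbitrary node $n$ of the CSDD to the Base Theorem for PSDDs (\cite[Theorem 1]{kisa2014}) by exploiting the definition of the strong extension as a convex hull of compatible-PSDD joint PMFs. First I would observe that it suffices to prove the first equivalence, since the second is its contrapositive once we know that $\overline{\mathbb{P}}_n(\bm{z}) = 0$ is equivalent to $\underline{\mathbb{P}}_n(\bm{z}) = 0$ (both probabilities are nonnegative, and $\overline{\mathbb{P}}_n(\bm{z}) = 0$ forces $\mathbb{P}(\bm{z}) = 0$ for every $\mathbb{P}$ in the credal set, hence $\underline{\mathbb{P}}_n(\bm{z}) = 0$; the converse direction of that auxiliary equivalence is the substantive one and will follow from the same case analysis). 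So the core is to characterise when $\underline{\mathbb{P}}_n(\bm{z})>0$.

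Next I would unfold the definitions. By construction, sub-SDDs of a CSDD that are different from $\bot$ are themselves CSDDs, so $\mathbb{K}^n(\bm{Z})$ is well-defined and equals the convex hull of $\{\mathbb{P}_n : \mathbb{P}_n$ induced by a compatible PSDD$\}$. Since $\mathbb{P}(\bm z)$ is a linear functional of $\mathbb{P}$, minimising over a convex hull equals minimising over the generating set, so $\underline{\mathbb{P}}_n(\bm{z}) = \inf \{ \mathbb{P}_n(\bm z) : \mathbb{P}_n$ from a compatible PSDD$\}$. Now I would use the PSDD Base Theorem: for \emph{every} compatible PSDD, $\mathbb{P}_n(\bm z) > 0$ iff $\bm z \models \interp{n}$. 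Hence, if $\bm z \not\models \interp{n}$, every term in the infimum is $0$, giving $\underline{\mathbb{P}}_n(\bm z) = \overline{\mathbb{P}}_n(\bm z) = 0$, which handles the ``only if'' direction of the first equivalence and both directions of the second.

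For the ``if'' direction — $\bm z \models \interp{n}$ implies $\underline{\mathbb{P}}_n(\bm z) > 0$ — the issue is that each individual compatible PSDD gives a strictly positive value, but an infimum of strictly positive numbers can be $0$. Here I would use the CSDD constraints: the intervals on $\top$-terminals satisfy $0 < l \le u < 1$, and the credal sets $\mathbb{K}_n(P)$ on decision nodes assign lower probability strictly above $0$ to every prime whose sub is not $\bot$. These are \emph{uniform} bounds independent of the choice of compatible PSDD. I would then argue by induction on the topological order of the nodes of the CSDD, following the recursive definition of $\mathbb{P}_n$: for a $\top$-terminal $X:[l,u]$ the value $\mathbb{P}_n(\bm z_v)$ is either $\theta \ge l > 0$ or $1-\theta \ge 1-u > 0$; for a literal terminal consistent with $\bm z$ it is $1$; for a decision node $n = \{(p_i,s_i)\}_{i=1}^k$ we have $\mathbb{P}_n(\bm z) = \mathbb{P}_{p_i}(\bm x)\cdot \mathbb{P}_{s_i}(\bm y)\cdot \theta_i$ for the unique $i$ with $\bm x \models \interp{p_i}$, and since $\bm z \models \interp{n}$ forces $\bm x \models \interp{p_i}$, $\bm y \models \interp{s_i}$ and in particular $s_i \neq \bot$, the factor $\theta_i$ is at least the (strictly positive) lower probability that $\mathbb{K}_n(P)$ assigns to $\interp{p_i}$, while the other two factors are positive by the inductive hypothesis. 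Multiplying the finitely many uniform lower bounds encountered along the recursion yields a constant $c > 0$, depending only on the CSDD (not on the compatible PSDD), with $\mathbb{P}_n(\bm z) \ge c$ for every compatible PSDD; hence $\underline{\mathbb{P}}_n(\bm z) \ge c > 0$.

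The main obstacle is exactly this last point: one must not be satisfied with ``each compatible PSDD gives something positive'' but must extract a bound uniform over all of them, which is why the strict inequalities $0 < l \le u < 1$ and the strict positivity of the decision-node lower probabilities in the definition of CSDD are essential and must be invoked explicitly. Everything else is bookkeeping: matching up the recursive clauses of $\mathbb{P}_n$ with the inductive hypothesis, and checking that $\bm z \models \interp{n}$ propagates correctly to the primes and subs (which is immediate from $\interp{n} = \bigvee_i \interp{p_i}\wedge\interp{s_i}$ and the fact that the $\interp{p_i}$ partition).
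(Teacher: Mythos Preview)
Your proposal is correct and follows essentially the same structural induction as the paper's proof: both arguments process terminal nodes directly and, for a decision node $n=\{(p_i,s_i)\}_{i=1}^k$, use that a total instantiation selects a unique prime $p_j$, then combine the inductive hypothesis on $p_j,s_j$ with the strict positivity of the CSDD local bounds ($0<l\le u<1$ on $\top$-terminals, strictly positive lower probability on primes with $s_i\neq\bot$). The only cosmetic differences are that you invoke the PSDD Base Theorem to dispatch the $\bm z\not\models\interp{n}$ direction in one line (the paper re-derives this inside the induction) and that you make the ``uniform over all compatible PSDDs'' lower-bound argument explicit, whereas the paper leaves it implicit in its recursive formula $\underline{\mathbb{P}}_n(\bm{xy})=\min_{\mathbb{K}_n}\sum_i \underline{\mathbb{P}}_{p_i}(\bm x)\,\underline{\mathbb{P}}_{s_i}(\bm y)\,\theta_i$.
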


\begin{example}\label{ex:toy_IDM}
Consider the PSDD in Figure \ref{fig:sdd}. This model can be converted into a CSDD by simply replacing the (precise) learning of the parameters from the data set of consistent observations in Figure \ref{fig:squares} with IDM-based (see Section \ref{sec:cs}) interval-valued estimates. The intervals associated with two of the seven parameters are:
\begin{align}
\theta_1 &= P(\neg X_1 \wedge \neg X_2) \in \left[ \frac{n_2+n_6+n_9}{n+s},\frac{n_2+n_6+n_9+s}{n+s} \right]\\
\theta_6 &= P(X_4|(\neg X_1 \wedge \neg X_2)\wedge X_3)\in \left[ \frac{n_2}{n_2+n_9+s},\frac{n_2+s}{n_2+n_9+s}\right]\,,
\end{align}
while the complete set of constraints on the parameters is in the appendix.
\end{example}

As in PSDDs, the CSs of a CSDD are associated with conditional
probabilities based on a context, which for ``deep'' nodes
are estimated from small amounts of data consistent with the
context; the use of robust estimators such as the IDM allows
for CS size to be proportional to the amount of data (see
Section \ref{sec:cs}), which leads to more conservative
inferences.

Inference in a CSDD is intended as the computation of lower and upper bounds with respect to its strong extension. An important remark is that, as the extreme points of the convex hull of a set also belong to the original set, the extreme points of the strong extension are joint PMFs induced by PSDDs (whose local PMFs are compatible with the local CSs in the CSDD). As a consequence of that, a CSDD encodes the same probabilistic independence relations of a PSDD  with the same underlying SDD, but based on the notion of strong independence instead of that of stochastic independence (see Section \ref{sec:cs}). Thus, the variables of a node are \emph{strongly} independent from the ones outside the node when its context is given and feasible. In this sense, the relation between PSDDs and CSDDs retraces that between BNs and credal networks \cite{cozman2000}. In the next three sections we address the problem of computing inferences in CSDDs.
\section{Marginal Inference in CSDDs}\label{sec:marg}
Recall that Algorithm \ref{alg:kisa} computes the probability of a marginal query in a PSDD. Algorithm \ref{alg:lowerlik} provides an extension of this procedure to CSDDs, allowing for the computation of lower/upper marginal probabilities. The procedure follows exactly the same scheme based on a topological order. Unlike Algorithm \ref{alg:kisa}, every time a decision node is processed, Algorithm \ref{alg:lowerlik} requires the solution of a linear programming task whose feasible region is the CS associated with the decision node.

\begin{algorithm}[htp]
    \caption{Lower probability of evidence}
    \label{alg:lowerlik}
    \begin{algorithmic} 
        \STATE {\bf input:} CSDD, evidence $\mathbf{e}$
        \FOR{$n \leftarrow 1,\ldots,N$}
        \STATE{ $\underline{\pi}(n) \leftarrow 0$}
        \IF{$n$ is terminal, $n\neq \bot$}
        \STATE{$v\leftarrow $ leaf vtree node that $n$ is normalized for}
        \STATE{$\underline{\pi}(n) \leftarrow \underline{\mathbb{P}}_n(\mathbf{e}_v)$}
        \ELSE
        \STATE{ $({(p_i,s_i)}_{i=1}^k,\mathbb{K}_n(P)) \leftarrow n$ (decision node)}
        \STATE{ $\underline{\pi}(n) \leftarrow  \min_{[\theta_1,\dots,\theta_k] \in \mathbb{K}_n(P)} \sum_{i=1}^k \underline{\pi}(p_i) \cdot \underline{\pi}(s_i) \cdot \theta_i$}
        \ENDIF
        \ENDFOR
        \STATE {\bf output:} $\underline{\mathbb{P}}(\mathbf{e})\leftarrow\underline{\pi}(N)$ 
    \end{algorithmic}
\end{algorithm}

To see why the algorithm properly computes $\underline{\mathbb{P}}(\mathbf{e})$ just regard the output of Algorithm \ref{alg:kisa} as a symbolic expression of the local probabilities involved in the CSDD local CSs. This is a multi-linear function of these probabilities subject to the linear constraints defining the CSs. The optimizations with respect to the CSs of the terminal nodes can be done independently of the others, and in any order. Afterwards, the decision nodes whose primes and subs are (already processed) terminal nodes can be safely processed too. In turn, decision nodes whose primes and subs are already processed terminal or decision nodes can be safely processed as well, and so on. Any topological order respects such priorities. The algorithm runs in polynomial time with respect to the SDD size, as it requires the solution of a single linear programming task for each CS of the CSDD. Note that for terminal nodes the optimization is trivial as it only consists in the computation of a lower probability for a CS over a Boolean variable. An analogous procedure can also be defined for upper probabilities.

The intuition above is made formal by the next theorem, stating that the output of Algorithm \ref{alg:lowerlik} is indeed the lower bound of a query with respect to the strong extension of the CSDD.

\begin{theorem}\label{theo:evidence}
Consider a CSDD and a node $n\neq \bot$ normalized for vtree $v$ with variables $\bo{Z}$. Let $\bo{e}$ be a partial or total evidence over variables in $\bo{Z}$ :
\begin{equation}
\underline{\pi}(n)= \lo{\mathbb{P}}_n(\bo{e})\,.
\end{equation}

where $\underline{\pi}(n)$ is the message associated to node $n$ by Algorithm \ref{alg:lowerlik}
\end{theorem}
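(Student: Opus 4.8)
The plan is to establish the identity $\underline{\pi}(n)=\underline{\mathbb{P}}_n(\bm{e})$ by induction on the position of $n$ in the topological order, using the observation that minimizing a linear functional over the strong extension $\mathbb{K}^n(\bm{Z})$ is the same as minimizing it over the generating set, i.e.\ over the joint PMFs induced by the PSDDs compatible with the sub-CSDD rooted at $n$. Thus $\underline{\mathbb{P}}_n(\bm{e})=\min_{\omega}\mathbb{P}_n^{\omega}(\bm{e})$, where $\omega$ ranges over all such compatible parameterisations, and for each fixed $\omega$ the number $\mathbb{P}_n^{\omega}(\bm{e})$ is exactly the output of Algorithm~\ref{alg:kisa} run on the sub-PSDD. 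The base case ($n$ terminal, $n\neq\bot$) is immediate: for $n\in\{X,\neg X\}$ the attached credal set is a singleton, and for $n=X{:}[l,u]$ it is the interval CS over a Boolean variable, so $\underline{\mathbb{P}}_n(\bm{e}_v)$ is literally the quantity assigned by the line ``$\underline{\pi}(n)\leftarrow\underline{\mathbb{P}}_n(\bm{e}_v)$''.

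For the inductive step, let $n=\{(p_i,s_i)\}_{i=1}^{k}$ be a decision node normalised for $v$, with $(\bm{X},\bm{Y})$ the variables of $v^l,v^r$ and $\bm{e}$ restricting to $\bm{e}_{v^l}$ on $\bm{X}$ and $\bm{e}_{v^r}$ on $\bm{Y}$. A short preliminary computation, combining the recursive definition of $\mathbb{P}_n$ with the Base Theorem for PSDDs (which forces $\mathbb{P}_{p_i}(\bm{x})=0$ whenever $\bm{x}\not\models\langle p_i\rangle$, so that the constraint $\bm{x}\models\langle p_i\rangle$ can be dropped from the inner sum), yields for every compatible $\omega$ the decomposition
\[
\mathbb{P}_n^{\omega}(\bm{e})=\sum_{i=1}^{k}\theta_i^{\omega}\,\mathbb{P}_{p_i}^{\omega}(\bm{e}_{v^l})\,\mathbb{P}_{s_i}^{\omega}(\bm{e}_{v^r})\,,
\]
with $(\theta_1^{\omega},\ldots,\theta_k^{\omega})\in\mathbb{K}_n(P)$. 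For the inequality $\underline{\mathbb{P}}_n(\bm{e})\ge\underline{\pi}(n)$ I would bound each factor from below by the induction hypothesis ($\mathbb{P}_{p_i}^{\omega}(\bm{e}_{v^l})\ge\underline{\pi}(p_i)$ and $\mathbb{P}_{s_i}^{\omega}(\bm{e}_{v^r})\ge\underline{\pi}(s_i)$, the $i$-th term vanishing when $s_i=\bot$ since then $\theta_i^{\omega}=0$), and use non-negativity of all quantities to get $\mathbb{P}_n^{\omega}(\bm{e})\ge\sum_i\theta_i^{\omega}\underline{\pi}(p_i)\underline{\pi}(s_i)\ge\min_{\theta\in\mathbb{K}_n(P)}\sum_i\theta_i\underline{\pi}(p_i)\underline{\pi}(s_i)=\underline{\pi}(n)$; taking the minimum over $\omega$ gives the claim. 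For the reverse inequality I would exhibit one global parameterisation $\omega^{*}$ attaining it: traverse the nodes of the sub-CSDD in topological order and, at each terminal-$\top$ node, select the endpoint of its interval minimising the relevant literal probability (which endpoint depends only on whether $\bm{e}$ pins that variable true, false, or leaves it free), and at each decision node $m=\{(q_j,t_j)\}_j$ select a vertex of $\mathbb{K}_m(P)$ minimising $\sum_j\theta_j\,\underline{\pi}(q_j)\,\underline{\pi}(t_j)$. A second induction on the topological order, using the displayed decomposition at each decision node, then shows $\mathbb{P}_m^{\omega^{*}}(\bm{e})=\underline{\pi}(m)$ for every node $m$, in particular for $m=n$, so that $\underline{\mathbb{P}}_n(\bm{e})\le\mathbb{P}_n^{\omega^{*}}(\bm{e})=\underline{\pi}(n)$.

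The conceptual content behind both halves — and the reason this bottom-up scheme is exact, whereas its analogue for conditional queries is only an outer approximation on multiply connected diagrams — is that $\mathbb{P}_n(\bm{e})$ is a multilinear function of the local parameters with non-negative coefficients, hence monotone in each of them, so the minimisation decouples into one independent linear-programming subproblem per decision node and one trivial subproblem per terminal node, solvable in any order consistent with the circuit topology; a single locally optimal choice at a shared node is simultaneously optimal for all of its parents, so sharing causes no loss. The delicate point, which I expect to be the main obstacle to write cleanly, is the reverse inequality: one must check that the greedily built $\omega^{*}$ is a genuinely global, consistent parameterisation (it is, being constructed once for the whole subcircuit rather than assembled from conflicting local optima) and that local optimality propagates upward — which is precisely what the second induction through the decomposition formula certifies. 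The bookkeeping in deriving that decomposition, namely passing from $\sum_{(\bm{x},\bm{y})\models\bm{e}}$ to the product $\mathbb{P}_{p_i}(\bm{e}_{v^l})\,\mathbb{P}_{s_i}(\bm{e}_{v^r})$ via the Base Theorem, is routine but the only genuinely computational step.
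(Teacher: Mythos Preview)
Your proposal is correct and follows essentially the same route as the paper: induction along the topological order, the marginal decomposition $\mathbb{P}_n(\bm{e})=\sum_i\theta_i\,\mathbb{P}_{p_i}(\bm{e}_{v^l})\,\mathbb{P}_{s_i}(\bm{e}_{v^r})$ (the paper cites \cite[Theorem~7]{kisa2014} for this), and then separability of the minimisation across the local credal sets. The only difference is presentational: the paper writes a short chain of equalities and justifies the key step by saying the minimisations are over distinct CSs, whereas you split into two inequalities and, for the reverse one, explicitly build a single global parameterisation $\omega^{*}$ and verify by a second induction that it attains $\underline{\pi}(m)$ at every node --- which is a cleaner way to see why sharing of sub-CSDDs causes no trouble here.
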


In the above theorem, there are no restrictions on the topology of the CSDD. Indeed, for any node $n$, the computation of $\lo{\pi}(n)$ only depends on $n$'s predecessors with respect to a topological order. To make this clear, assume that the CSDD is multiply connected, i.e., that there exist two distinct decision nodes $n$ and $n'$ sharing a sub-CSDD $m$, say in the $i$-th , respectively $j$-th element\footnote{The case in which two nodes $n$ and $n'$ share a common sub-CSDD possibly lower than a prime or sub relies on the one treated here.}. Then $m$ is a predecessor of both $n$ and $n'$. Hence, $\lo{\pi}(m)$ will be already computed when the algorithm is about to compute $\lo{\pi}(n)$ and $\lo{\pi}(n')$, and will appear in the computations of the latter as a factor of the $i^{th}$, respectively $j^{th}$ coefficient of two LPs over distinct local CSs attached to $n, n'$ respectively. This means that the optimal configuration of $m$ will not be modified in any manner during the optimizations relative to $n$ and $n'$, and so multiply connectedness does not compromise the operations of Algorithm \ref{alg:lowerlik}.

\begin{example}\label{ex:marginal}
As an example of application of Algorithm \ref{alg:lowerlik}, assume the counts for the observations of the ten permitted four-pixel images in Figure \ref{fig:squares} are $n_0 = 30$, $n_1 = 8$, $n_2 = 5$, $n_3 = 17$, $n_4 = 3$, $n_5 = 0$, $n_6 = 12$, $n_7 = 2$, $n_8 = 9$, and $n_9 = 14$, this leading to a total of $n=100$ observations. Using the IDM with $s=1$, the PSDD in Figure \ref{fig:sdd} becomes a CSDD whose parameters are constrained by the following constraints:
\begin{align*}
\theta_1  \in \left[ \frac{31}{101},\frac{32}{101} \right] , \theta_2  \in \left[ \frac{52}{101},\frac{53}{101}\right], \theta_3  \in \left[ \frac{12}{32},\frac{13}{32}\right],\\\theta_4  \in \left[ \frac{39}{53},\frac{40}{53}\right], \theta_5  \in \left[ \frac{8}{53},\frac{9}{53}\right],
\theta_6  \in \left[ \frac{5}{20},\frac{6}{20}\right], \theta_7  \in \left[ \frac{33}{45},\frac{34}{45}\right]\,.
\end{align*}
Consider a complete evidence $(X_1=\bot,X_2=\bot,X_3=\bot,X_4=\top)$. The output of Algorithm \ref{alg:lowerlik} corresponds to the following minimization:
\begin{equation}\min_{\substack{\theta_1\in \left[ \frac{31}{101},\frac{32}{101} \right]\\ \theta_2  \in \left[ \frac{52}{101},\frac{53}{101}\right]}}\lo{\pi}(24)\cdot\lo{\pi}(25)\cdot \theta_1 + \lo{\pi}(26)\cdot\lo{\pi}(27)\cdot \theta_2 + \lo{\pi}(28)\cdot\lo{\pi}(29)\cdot( 1- \theta_1 - \theta_2)\,,
\end{equation}
where $\underline{\pi}(24)$ requires no minimization because of the sharp parameters on the arcs of node $24$ and has therefore value $\lo{\pi}(0)\cdot\lo{\pi}(1)\cdot 1 = 1$, while
\begin{equation}\label{eq:minim}
\lo{\pi}(25) = \min_{\theta_3  \in \left[ \frac{12}{32},\frac{13}{32}\right]} \lo{\pi}(4)\cdot\lo{\pi}(5)\cdot \theta_3 + \lo{\pi}(6)\cdot\lo{\pi}(7)\cdot (1-\theta_3)\,.
\end{equation}
As $\lo{\pi}(6)= 0$ and $\lo{\pi}(4)\cdot\lo{\pi}(5)=1\cdot 1 = 1$ the result of the minimization in Equation \eqref{eq:minim} is $\frac{12}{32}$. It is an easy exercise to verify that both $ \lo{\pi}(26)$ and $\lo{\pi}(28)$ are equal to zero. It follows that the output $\lo{\pi}(15)$, i.e. the lower probability  $\lo{\mathbb{P}}(X_1= \bot, X_2 = \bot, X_3= \bot, X_4 = \top)$ has value $\frac{12}{32} \cdot \frac{31}{101} \simeq 0.1151$. Note that the complete evidence considered in this example corresponds to the four-pixel image in Figure \ref{fig:squares} whose count is $n_6$, and value returned for the lower probability looks reasonably consistent with the maximum likelihood estimate $\frac{n_6}{n} = \frac{12}{100}$.
\end{example}
\section{Conditional Queries in CSDDs}\label{sec:cond}
In the previous section we discussed the computation by Algorithm \ref{alg:lowerlik} of lower (or upper) marginal probabilities in a CSDD. This corresponds to a sequence of linear programming tasks whose feasible regions are the CSs of the CSDD processed in topological order, thus taking polynomial time with respect to the diagram size. In this section we show that something similar can also be done for conditional queries.

Let $X=x$ denote the variable and state to be queried, and let $\bm{e}$ be the available evidence about other variables in a CSDD $\alpha$ rooted at $r$ with variables $\bm{X}$. 
The task is to compute the lower conditional probability with respect to the strong extension, i.e., 
\begin{equation}\label{eq:gbr}
\underline{\mathbb{P}}(x|\bm{e}) = \min_{\mathbb{P}(\bm{X}) \in \mathbb{K}^r(\bm{X})} \frac{\mathbb{P}(x,\bm{e})}{\mathbb{P}(\bm{e})}\,.
\end{equation}
To have $\underline{\mathbb{P}}(x|\bm{e}) $ well defined, we assume $\bm{e}$ to be consistent with the underlying SDD interpretation $\interp{\alpha}$. To see this, assume there is a total instantiation of $\bm{X}$ extending $\bm{e}$. Then, given an extreme point $\mathbb{P}(\bm{X})$ of the strong extension $\mathbb{K}^r(\bm{X})$, the Base Theorem for PSDDs tells us that $\mathbb{P}( \bm{x}) > 0$ if and only if $ \bm{x} \models \interp{\alpha}$. This immediately yields that the denominator in the right-hand side of Equation \eqref{eq:gbr} is positive for each  extreme point of the strong extension $\mathbb{K}^r(\bm{X})$ if and only if $\bm{e}$ is consistent with $\interp{\alpha}$.

Note also that if $\bm{e} \models \lnot x$, then $\mathbb{P}(x, \bm{e})=0$, and similarly if $\bm{e} \models x$, then $\mathbb{P}(\lnot x, \bm{e})=0$. 
Otherwise both $(x, \bm{e}) = (x, \bm{e}_{\overline{v}})$ and $(\lnot x, \bm{e}) = (\lnot x, \bm{e}_{\overline{v}})$, and therefore $\mathbb{P}(x, \bm{e})=\mathbb{P}(x, \bm{e}_{\overline{v}})$ and $\mathbb{P}(\lnot x, \bm{e})=\mathbb{P}(\lnot x, \bm{e}_{\overline{v}})$, 
where $v$ is the leaf node with variable $X$ in the vtree the CSDD is normalized for. In the following we might therefore assume $\bm{e}_{\overline{v}}=\bm{e}$.

The task in Equation \eqref{eq:gbr} corresponds to the linearly constrained minimization of a (multilinear) fractional function of the probabilities. This prevents a straightforward application of the same approach considered in the previous section. Thus, we consider instead a decision version of the optimization task in Equation \eqref{eq:gbr}, i.e., deciding whether or not the following inequality is satisfied for a given $\mu\in [0,1]$:
\begin{equation}\label{eq:gbr2}
\underline{\mathbb{P}}(x|\bm{e}) > \mu\,.
\end{equation}

As for the algorithm in \cite{decooman2010epistemic}, an algorithm able to solve Equation \eqref{eq:gbr2} for any $\mu\in [0,1]$ inside a bracketing scheme linearly converges to the actual value of the lower probability. 

As $\mathbb{P}(x|\bm{e})+\mathbb{P}(\neg x|\bm{e})=1$ for each $\mathbb{P}(\bm{X}) \in \mathbb{K}^r(\bm{X})$, and assuming that $\mathbb{P}(\bm{e})>0$, Equation \eqref{eq:gbr2} holds if and only if the following inequality holds:
\begin{equation}\label{eq:gbr3}
\min_{\mathbb{P}(\bm{X}) \in \mathbb{K}^r(\bm{X})}  \left[ (1-\mu) \mathbb{P}(x,\bm{e})-\mu \mathbb{P}(\neg x,\bm{e}) \right] > 0\,.
\end{equation}

In order to define an algorithm solving the task of deciding whether or not  inequality \eqref{eq:gbr3} is satisfied for a given $\mu \in [0,1]$ we need to define the following auxiliary quantities.
\begin{enumerate} 
\item[(i)] For a given value of $\mu$ and any node $n\neq \bot$ normalized for vtree node $v$:
\begin{equation}
\underline{\rho}_n(\mu):= (1-2\mu)\cdot \underline{\mathbb{P}}_n(\bm{e}_v)\,.
\end{equation}
\item[(ii)] For a given value of $\mu$ and a terminal node $n\neq \bot$:
\begin{equation}\label{eq:gamma}
\Lambda_{n}(\mu):=
\left\{
\begin{array}{ll}
\lambda_{n}(\mu) & \mathrm{if}\,\, $X$ \text{ occurs in } n \\
\underline{\rho}_n(\mu) & \mathrm{otherwise}\,,
\end{array}
\right.
\end{equation}
with
\begin{equation}\label{eq:lambda}
\lambda_{n}(\mu):=
\min \left\{ 
\begin{array}{l}
(1-\mu) \underline{\mathbb{P}}_{n}(x)-\mu \overline{\mathbb{P}}_{n}(\neg x) ,\\
(1-\mu) \overline{\mathbb{P}}_{n}(x)-\mu \underline{\mathbb{P}}_{n}(\neg x)
\end{array}
\right\}\,,
\end{equation}
where the lower and upper probabilities in the above expression are those associated with the bounds in the CS specification for $X=\top$ and the other values are obtained by the conjugacy relation $\underline{\mathbb{P}}(x)=1-\overline{\mathbb{P}}(\neg x)$.
\item[(iii)] For any node $n$ normalized for vtree node $v$, for $z\in \mathbb{R}$:
\begin{equation}\label{eq:sig}
\underline{\sigma}_n(z) = \begin{cases} 
\overline{\mathbb{P}}_{n}(\bm{e}_v) & \text{ if } z < 0 \\
\underline{\mathbb{P}}_{n}(\bm{e}_v) & \text{ otherwise}\,,
\end{cases}\end{equation}
for $n\neq \bot$, while if $n=\bot$ we set $\underline{\sigma}_n(z)= 0$ for any $z\in \mathbb{R}$.
\end{enumerate}
We are ready to define Algorithm \ref{alg:lowercon}.
\begin{algorithm}[htp!]
    \caption{Lower conditional probability}
    \label{alg:lowercon}
    \begin{algorithmic} 
        \STATE {\bf input:} CSDD, $\mu$, $X=x$, {$\bm{e}$ }
        \FOR{$n \leftarrow 1,\ldots,N$}
        		\STATE{ $\underline{\pi}(n) \leftarrow 0$}
        		\STATE{$v\leftarrow $  vtree node that $n$ is normalized for}
       		 \IF{node $n$ is terminal, $n \neq \bot$}
       		 	\STATE{$\underline{\pi}(n) \leftarrow \Lambda_{n}(\mu)$  as in Eq. \eqref{eq:gamma}}
      		  \ELSE
        			\STATE{ $({(p_i,s_i)}_{i=1}^k,\mathbb{K}_n(P)) \leftarrow n$ (decision node)}
        			\IF{$X$ occurs in $v$}	
				\IF{$X$ occurs in $v^l$}
        					\STATE{$u_i \leftarrow p_i$ and $w_i \leftarrow s_i$ for $1\leq i \leq k$}
        				\ELSIF{$X$ occurs in $v^r$}
        					\STATE{$u_i \leftarrow s_i$ and $w_i \leftarrow p_i$ for $1\leq i \leq k$}
        				\ENDIF
			\STATE{ $\underline{\pi}(n) \leftarrow \min_{[\theta_1,\dots,\theta_k] \in \mathbb{K}_n(P)} \sum_{i=1}^k \underline{\pi}(u_i) \cdot \underline{\sigma}_{w_i}(\underline{\pi}(u_i)) \cdot 				\theta_i$} 
            \STATE{with $\underline{\sigma} $ as in Eq. \eqref{eq:sig}} 
        			\ELSE
				\STATE{$\underline{\pi}(n) \leftarrow \underline{\rho}_n(\mu)$}
        			\ENDIF    
         	\ENDIF
        \ENDFOR
        \STATE {\bf output: $\mathrm{sign}[\underline{\mathbb{P}}(x|\bm{e})-\mu] \leftarrow  \mathrm{sign} [\underline{\pi}(N)]$ }
    \end{algorithmic}
\end{algorithm}

The following result proves the correctness of Algorithm \ref{alg:lowercon} for singly connected CSDDs.

\begin{theorem}\label{theo:conditional}
Consider a singly connected CSDD and a node $n\neq \bot$ normalized for vtree node $v$, whose variables are $\bo{X}$. For any instantiation $x$ of a single variable $X\in \bo{X}$ and any coherent evidence $\bo{e}$ over some or all of the remaining variables, 
\begin{equation}
\underline{\pi}(n) = \min_{\mathbb{P}(\bm{X}) \in \mathbb{K}^n(\bm{X})}  \left[ (1-\mu) \mathbb{P}(x,\bm{e})-\mu \mathbb{P}(\neg x,\bm{e}) \right]\,.
\end{equation}
where $\underline{\pi}(n)$ is the message of node $n$ in Algorithm \ref{alg:lowercon}.
\end{theorem}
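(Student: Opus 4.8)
The plan is to prove the statement by induction on the topological order of the nodes of the CSDD, following the structure of Algorithm~\ref{alg:lowercon}, and mirroring the inductive argument used for Theorem~\ref{theo:evidence}. The key idea is the same ``symbolic expression'' view: the quantity $(1-\mu)\mathbb{P}_n(x,\bm e) - \mu\mathbb{P}_n(\neg x, \bm e)$, expanded according to the PSDD recursion, is a multilinear function of the local parameters $\theta_i$ and of the local terminal-node probabilities, subject to the linear constraints defining the local credal sets. Because the CSDD is singly connected, every sub-CSDD appears in exactly one context, so the local credal sets are genuinely independent and the global minimization decomposes into the nested local minimizations performed by the algorithm. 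The induction hypothesis at node $n$ is exactly the displayed equation; I would carry the induction separately for three kinds of nodes, according to whether $X$ occurs in the vtree $v$ that $n$ is normalized for.

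First, the \emph{base cases}: terminal nodes $n\neq\bot$. If $X$ does not occur in $n$, then $x$ and $\neg x$ are irrelevant to the variables of $n$, the instantiation $(x,\bm e_v)$ reduces to $\bm e_v$, and $(1-\mu)\mathbb{P}_n(\bm e_v) - \mu\mathbb{P}_n(\bm e_v) = (1-2\mu)\mathbb{P}_n(\bm e_v)$; minimizing over $\mathbb{K}^n$ gives $\underline\rho_n(\mu)$, which is the value $\Lambda_n(\mu)$ assigned by the algorithm. If $X$ occurs in $n$ (so $v$ is the leaf for $X$), then $\mathbb{P}_n$ ranges over a credal set on $\{X,\neg X\}$; the objective $(1-\mu)\mathbb{P}_n(x) - \mu\mathbb{P}_n(\neg x)$ is linear in $\mathbb{P}_n(x)$, so its minimum over an interval is attained at an endpoint, which is precisely the ``$\min$ of two corner evaluations'' defining $\lambda_n(\mu)$ in Equation~\eqref{eq:lambda} (using $\underline{\mathbb{P}}(x)=1-\overline{\mathbb{P}}(\neg x)$). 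This matches $\Lambda_n(\mu)=\lambda_n(\mu)$. The case $n=\bot$ does not arise since $n\neq\bot$ by hypothesis, but note the algorithm sets $\underline\sigma_\bot = 0$ consistently.

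Next, the \emph{inductive step} for a decision node $n=\{(p_i,s_i)\}_{i=1}^k$ normalized for $v$. If $X$ does \emph{not} occur in $v$, the argument is identical to the terminal ``otherwise'' case: $\mathbb{P}_n(x,\bm e_v)=\mathbb{P}_n(\bm e_v)$, the objective is $(1-2\mu)\mathbb{P}_n(\bm e_v)$, and minimizing over $\mathbb{K}^n$ yields $\underline\rho_n(\mu)$ as the algorithm returns. The substantive case is when $X$ occurs in $v$, hence in exactly one of $v^l,v^r$; say $X$ occurs in $v^l$, so $u_i=p_i$, $w_i=s_i$ (the mirror case is symmetric). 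Using the PSDD product decomposition $\mathbb{P}_n(\bm a,\bm b)=\mathbb{P}_{p_i}(\bm a)\mathbb{P}_{s_i}(\bm b)\theta_i$ for the unique $i$ with $\bm a\models\interp{p_i}$, and the fact that the events $\interp{p_i}$ partition, one expands
\[
(1-\mu)\mathbb{P}_n(x,\bm e) - \mu\mathbb{P}_n(\neg x,\bm e)
= \sum_{i=1}^{k}\theta_i\Bigl[(1-\mu)\mathbb{P}_{p_i}(x,\bm e_{v^l})\,\mathbb{P}_{s_i}(\bm e_{v^r}) - \mu\,\mathbb{P}_{p_i}(\neg x,\bm e_{v^l})\,\mathbb{P}_{s_i}(\bm e_{v^r})\Bigr],
\]
where I use that $\neg x$, like $x$, lives in $v^l$, so the $s_i$-factor $\mathbb{P}_{s_i}(\bm e_{v^r})$ is common to both terms. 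Factoring that common nonnegative factor out, the bracket becomes $\mathbb{P}_{s_i}(\bm e_{v^r})\cdot\bigl[(1-\mu)\mathbb{P}_{p_i}(x,\bm e_{v^l}) - \mu\mathbb{P}_{p_i}(\neg x,\bm e_{v^l})\bigr]$. By single-connectedness the credal sets $\mathbb{K}^{p_i}$, $\mathbb{K}^{s_i}$ and $\mathbb{K}_n(P)$ vary independently, so the global minimum factorizes: one minimizes each $\mathbb{P}_{p_i}$-bracket independently over $\mathbb{K}^{p_i}$ — by induction this minimum is $\underline\pi(u_i)$ — then, for the $s_i$-factor, one chooses $\mathbb{P}_{s_i}(\bm e_{v^r})$ to be as small as possible when the accompanying bracket is nonnegative and as large as possible when it is negative, i.e.\ $\underline\sigma_{w_i}(\underline\pi(u_i))$; and finally one minimizes the resulting linear form $\sum_i\theta_i\,\underline\pi(u_i)\,\underline\sigma_{w_i}(\underline\pi(u_i))$ over $[\theta_1,\dots,\theta_k]\in\mathbb{K}_n(P)$, which is exactly the LP solved by the algorithm.

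\textbf{The main obstacle.} The delicate point is the sign-dependent choice of the $s_i$-factor and the legitimacy of pushing the minimum inside the sum and the product. One must argue that $\min_{\mathbb{P}_{p_i},\mathbb{P}_{s_i}} \mathbb{P}_{s_i}(\bm e_{v^r})\cdot g(\mathbb{P}_{p_i})$, where $g(\mathbb{P}_{p_i})=(1-\mu)\mathbb{P}_{p_i}(x,\bm e_{v^l}) - \mu\mathbb{P}_{p_i}(\neg x,\bm e_{v^l})$ and $\mathbb{P}_{s_i}(\bm e_{v^r})\ge 0$, equals $\underline\sigma_{w_i}\bigl(\min_{\mathbb{P}_{p_i}} g(\mathbb{P}_{p_i})\bigr)\cdot\min_{\mathbb{P}_{p_i}} g(\mathbb{P}_{p_i})$. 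This uses that for a product $t\cdot g$ with $t\ge 0$ the infimum over $(t,g)$ in a product domain is $\overline t\cdot \underline g$ if $\underline g<0$ and $\underline t\cdot\underline g$ if $\underline g\ge 0$ (noting $\underline g\ge 0$ forces $g\ge 0$ throughout, while $\underline g<0$ is witnessed at a single point so the worst case still factorizes) — here single-connectedness is essential, because it guarantees $\mathbb{K}^{p_i}$ and $\mathbb{K}^{s_i}$ are not coupled by a shared sub-CSDD. I also need to check the boundary/degenerate subcases: when $s_i=\bot$ the corresponding $\theta_i$ is forced to $0$ and the term drops, consistently with $\underline\sigma_\bot=0$ in the algorithm; and when $\bm e$ is inconsistent with $\interp{p_i}$ or $\interp{s_i}$ the relevant message is $0$ by the Base Theorem, so the contribution vanishes, again matching the recursion. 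Finally, specializing to the root $n=N$ and reading off the sign recovers the stated decision criterion for Equation~\eqref{eq:gbr3}, which is the intended use of the theorem.
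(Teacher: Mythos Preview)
Your proposal is correct and follows essentially the same route as the paper: induction along the topological order, a case split on whether the queried variable $X$ occurs in the vtree node $v$, expansion via the PSDD product decomposition, factoring out the common nonnegative $\mathbb{P}_{s_i}(\bm e_{v^r})$ term, and then using single-connectedness to decouple the local optimizations before invoking the induction hypothesis on the $u_i$-side and the definition of $\underline\sigma$ on the $w_i$-side. Your explicit justification of the sign-dependent choice of $\underline\sigma_{w_i}$ (the ``$\min_{t\ge 0,\,g} t\cdot g$'' argument) is in fact more careful than the paper's proof, which simply cites ``induction hypothesis plus $\underline{\sigma}_{s_i}$'s definition'' at that step; note also that, as the paper remarks just after its proof, for nodes in which $X$ does not occur the value $\underline\pi(n)=\underline\rho_n(\mu)$ is never actually used downstream (only $\underline\sigma_n$ is), so the literal equality in that branch is immaterial to the algorithm's correctness.
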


Observe that, both for terminal and decision nodes whose variables do not contain the queried variable $X$, the value  $\underline{\pi}(n)$ does not really matter, meaning that it does not affect the computation of the messages of the nodes processed after them. Indeed, consider a node $n'$ (terminal or decision) appearing as prime or sub in a decision node $n$, and assume $X$ occurs in $n$ but not in $n'$. Then the message $\underline{\pi}(n')$ will not contribute to $\underline{\pi}(n)$, but $\underline{\sigma}_{n'}(\underline{\pi}(n''))$ will, instead, where $n''$ is the node that, together with $n'$, forms an element of $n$. An implementation of Algorithm \ref{alg:lowercon} might therefore simply set $\underline{\pi}(n)=0$ for each node $n$ in which the queried variable does not occur, in order to avoid useless computations.

The procedure described by Algorithm \ref{alg:lowercon} requires the solution of a number of linear programming tasks, whose feasible regions are the CSs associated with the CSDD, equal to the number of decision nodes. The computation of the coefficients of the objective function in these tasks requires a call of Algorithm \ref{alg:lowerlik} for each optimization variable to compute the quantities in Equation \eqref{eq:sig}. Note also that, for each decision node $n=(\{(p_i,s_i)\}_{i=1}^k, \mathbb{K}_n(P))$  the optimization in the recursive call is performed before the one in Equation \eqref{eq:sig}. As discussed before, by iterated calls of Algorithm \ref{alg:lowercon}, we can therefore compute lower conditional queries in polynomial time in singly connected CSDDs.

\begin{example}\label{ex:conditional} Let us demonstrate how Algorithm \ref{alg:lowercon} works in practice by considering the same CSDD, with the same training data, as in the Example \ref{ex:marginal}. Consider the query $X_1=\top$ given evidence  $(X_2=\bot, X_3=\bot, X_4= \top)$. Take a generic $\mu \in [0,1]$. As the queried variable is the left-most variable in the variables ordering induced by the vtree in Figure \ref{fig:vtree1}, the output of Algorithm \ref{alg:lowercon} is the result of the following minimization:
\begin{equation}\label{eq:rootcond}
\min_{\substack{\theta_1\in \left[ \frac{31}{101},\frac{32}{101} \right]\\ \theta_2  \in \left[ \frac{52}{101},\frac{53}{101}\right]}}\lo{\pi}(24)\cdot\lo{\sigma}_{25}(\lo{\pi}(24))\cdot \theta_1 + \lo{\pi}(26)\cdot\lo{\sigma}_{27}(\lo{\pi}(26))\cdot \theta_2 + \lo{\pi}(28)\cdot\lo{\sigma}_{29}(\lo{\pi}(28))\cdot( 1- \theta_1 - \theta_2)\,.
\end{equation}
Computing $\lo{\pi}(24)$ requires no minimization because of the sharp parameters on the arcs of node $24$ and its value is $\lo{\pi}(0)\cdot\lo{\sigma}_1(\lo{\pi}(0))\cdot 1$. As node $0$ is a terminal node containing the queried variable, $\lo{\pi}(0)=\lambda_0(\mu)$. The latter quantity is equal to $-\mu$ because the query $X_1=\top$ does not agree with node $0$ whose literal is $\neg X_1$. Since $\lo{\pi}(0)<0$, $\lo{\sigma}_1(\lo{\pi}(0))= \up{\mathbb{P}}_1(X_2=\bot) =1$. Hence, $\lo{\pi}(24)= -\mu<0$, and $\lo{\sigma}_{25}(\lo{\pi}(24))= \up{\mathbb{P}}_{25}(X_3=\bot, X_4=\top)= \frac{13}{32}$.
The value of $\lo{\pi}(26)$ is the result of the following minimization:
\begin{equation}
\min_{\theta_4  \in \left[ \frac{39}{53},\frac{40}{53}\right]} \lo{\pi}(8)\cdot\lo{\sigma}_9(\lo{\pi}(8))\cdot \theta_4 + \lo{\pi}(10)\cdot\lo{\sigma}_{11}(\lo{\pi}(10))\cdot (1-\theta_4)\,.
\end{equation}
Both node $8$ and node $10$ contain the queried variable, hence $\lo{\pi}(8)=\lambda_8(\mu)=(1-\mu)$ and $\lo{\pi}(10)=\lambda_{10}(\mu)=-\mu$. Accordingly to the signs of the latter, $\lo{\sigma}_9(\lo{\pi}(8))= \lo{\mathbb{P}}_9(X_2=\bot)=1$ and $\lo{\sigma}_{11}(\lo{\pi}(10))= \up{\mathbb{P}}_{11}(X_2=\bot)=0$. Hence, $\lo{\pi}(26)= (1-\mu)\cdot \frac{39}{53}>0$. Moreover, $\lo{\sigma}_{27}(\lo{\pi}(26))$ is equal to $\lo{\mathbb{P}}_{27}(X_3=\bot, X_4=\top)$ and hence corresponds to:
\begin{align*}
\min_{\theta_5  \in \left[ \frac{8}{53},\frac{9}{53}\right]} \lo{\mathbb{P}}_{12}(X_3=\bot)\cdot \lo{\mathbb{P}}_{13}(X_4=\top)\cdot \theta_5 + \lo{\mathbb{P}}_{14}(X_3=\bot)\cdot \lo{\mathbb{P}}_{15}(X_4=\top)\cdot (1-\theta_5)\\
= \min_{\theta_5  \in \left[ \frac{8}{53},\frac{9}{53}\right]} \frac{33}{45}\cdot (1-\theta_5)
 = \frac{33}{45}\cdot \frac{44}{53}
 = \frac{484}{795}\,.
\end{align*}
One can easily verify that $\lo{\pi}(28)=0$. Thus, the minimization of Equation \eqref{eq:rootcond} rewrites as the following linear programming task:
\begin{equation}
\min_{\substack{\theta_1\in \left[ \frac{31}{101},\frac{32}{101} \right]\\ \theta_2  \in \left[ \frac{52}{101},\frac{53}{101}\right]}} -\mu \cdot \frac{13}{32}\cdot \theta_1 + (1-\mu)\cdot \frac{39}{53}\cdot \frac{484}{795} \cdot \theta_2\,,
\end{equation}
whose optimum is a numerical zero for $\mu \simeq 0.657$.
\end{example}

The assumption of singly connected topology is crucial for the proof of Theorem \ref{theo:conditional}. Yet, nothing prevents us from applying Algorithm \ref{alg:lowercon} to a multiply connected CSDD. Considered the last iteration of the algorithm leading to the value of $\mu$ for which the output of Algorithm \ref{alg:lowercon} is a numerical zero. 
The CSs associated with nodes of multiplicity higher than one have been used more than once as the feasible region of a linear programming tasks during the recursive calls of the algorithm. If the optima of those linear programming tasks corresponds to different extreme points of the same CS, we might have that an outer approximation has been introduced, i.e., the estimate of the lower (upper) probability returned by the algorithm is smaller (greater) than the exact one. Vice versa, if this is not the case, we might conclude that the algorithm returned an exact inference. To check this, we only need to store the extreme points of the CSs leading to the optima of the different linear programming tasks executed by the algorithm. In other words, no additional computational costs are required to decide whether or not the output of the algorithm is exact. Moreover, if an approximation has been introduced, a simple brute-force approach to the computation of the exact solution consists in running the same inferential task in the PSDDs compatible with the input CSDD and such that: (i) the PMFs of the nodes with multiplicity one and of the nodes with multiplicity more than one in case all the linear programming tasks have the same optimum are just the extreme points of the CSs that led to the optimum; (ii) the PMFs for the other nodes are any possible extreme points of the CSs, each with its multiplicity. This represents a brute-force algorithm involving a number of PSDD inference tasks exponential in the number of credal sets such as in (ii). These ideas are clarified by the following example.

\begin{example}
Consider a CSDD over the PSDD structure in Figure \ref{fig:multi}.whose CSs are all \emph{precise} (i.e., made of a single PMF) apart from specifications for each node except for node $3$ for which we assume a CS induced by the constraint $\theta_1 \in [l,u]$. Consider the conditional query $X_1=\top$ given evidence $X_3=\top$. For a given $\mu \in $ $ ]0,1[$, it is straightforward to verify that the messages $\underline{\pi}(n)$ of terminal nodes $n\in \{0,1,2,3,4,7,10,12,14\}$ are all equal to zero, while $\underline{\pi}(5)=\underline{\pi}(9)=1-\mu$ and $\underline{\pi}(6)=\underline{\pi}(8)=-\mu$. Consider now the decision nodes $11$ and $13$, sharing node $4$. We have:
\begin{equation}\label{eq:theta1a}
\underline{\pi}(11)=\lo{\pi}(5)\cdot \lo{\sigma}_4(\lo{\pi}(5))\cdot 1 + \lo{\pi}(6)\cdot \lo{\sigma}_7(\lo{\pi}(6))\cdot 0
 = (1-\mu)\cdot \lo{\mathbb{P}}_4(X_3=\top)\,,
\end{equation}
and
\begin{equation}\label{eq:theta1b}
\underline{\pi}(13)=\lo{\pi}(8)\cdot \lo{\sigma}_4(\lo{\pi}(8))\cdot 1 + \lo{\pi}(9)\cdot \lo{\sigma}_10(\lo{\pi}(9))\cdot 0 = -\mu\cdot \up{\mathbb{P}}_4(X_3=\top)\,.
\end{equation}
The two optimizations in Equations \eqref{eq:theta1a} and \eqref{eq:theta1b} with respect to $\theta_1$ give divergent values, i.e., $\theta_1=l$ in the first case and $\theta_1=u$ in the second. This is not consistent with the definition of strong extension in Section \ref{sec:csdd} and it would lead to an approximate value of the lower probability smaller than the exact one because of fewer constraints.
\end{example}

\section{Robustness of MAP inference in PSSDs}\label{sec:map}
CSDD can be also intended as tool for sensitivity analysis in PSDDs. Here we show how to evaluate the \emph{robustness} of a MAP inference in a PSDD. Let us first apply Algorithm \ref{alg:map} to a PSDD rooted at $r$ with evidence $\bo{e}$. We might ask ourselves whether or not the resulting configuration is sensitive to variations in the PSDD parameters. In order to do so, we also consider a CSDD the PSDD is consistent with. If all the PSDDs consistent with this CSDD have the same optimal configuration, and hence this is equal to the one obtained in the original PSDD, we say that the MAP inference is \emph{robust}. The following definition formalizes this idea.

\begin{definition} Given a PSDD $r$ over variables $\{\bo{X}, \bo{E}\}$ - with $\bo{X}$ and $\bo{E}$ disjoint - and an evidence $\bo{e}$ over variables $\bo{E}$, $\bo{x}^* := \arg \max_{\bm{x} \in \bm{\mathcal{X}}} \mathbb{P}_r(\bo{x},\bm{e})$ is robust with respect to a CSDD with which $r$ is consistent if:
\begin{equation}\label{eq:defrob}
\mathop{\max}_{\bo{x}\neq \bo{x}^*} \mathop{\max}_{\mathbb{P} \in \mathbb{K}^r} \frac{\mathbb{P}(\bo{x}, \bo{e})}{\mathbb{P}(\bo{x}^*, \bo{e})} < 1\,.
\end{equation}
\end{definition}
If $(\bo{x}^*,\bo{e}$) is inconsistent with $r$, we say that the inference is not robust by definition and give to the maximum in Equation \eqref{eq:defrob} a reference value one.

Algorithm \ref{alg:mapcsdd} is a subroutine used to decide the robustness of a MAP instance. It takes as input a CSDD rooted at $r$ over variables $\{\bo{X}, \bo{E}\}$ - with $\bo{X}$ and $\bo{E}$ disjoint - and an evidence $\bo{e}$ over variables $\bo{E}$, and computes $\mathop{\max}_{\bo{x}\in \bm{\mathcal{X}}} \overline{\mathbb{P}}_r(\bo{x},\bm{e})$. 

\begin{algorithm}[htp]
    \caption{"Credal MAP"}
    \label{alg:mapcsdd}
    \begin{algorithmic} 
     \STATE {\bf input:} CSDD $r$, evidence $\mathbf{e}$
     \FOR{$n \leftarrow 1,\ldots,N$}
        \STATE{ $M(n) \leftarrow 0$}
        \IF{node $n$ is terminal}
             \STATE{$v\leftarrow $ leaf vtree node that $n$ is normalized for}
              \IF{$var(v)\in \bo{X}$}
                  \IF{$n \in \{ X, \neg X \}$}
                       \STATE{$M(n)\leftarrow 1$}
                 \ELSIF{$n= (X, [l,u])$}
                       \STATE{$M(n)\leftarrow \max \{ u, 1-l\}$}
                 \ENDIF      
            \ELSE
                 \STATE{$M(n)\leftarrow \overline{P}_n(\bo{e}_v) $}        
           \ENDIF
        \ELSE
            \STATE{ $({(p_i,s_i)}_{i=1}^k,\mathbb{K}_n(P)) \leftarrow n$ (decision node)} 
            \STATE{ $M(n) \leftarrow  \max_{1\leq i \leq k} \up{\theta_i} \cdot M(p_i) \cdot M(s_i)$
            with  $\up{\theta_i} :=\max_{\mathbb{K}_{n}} \theta_i$}
        \ENDIF
        \ENDFOR
        \STATE {\bf output:} $M(N)$ 
    \end{algorithmic}
\end{algorithm}

The following theorem gives a semantics for the output of Algorithm \ref{alg:mapcsdd}.

\begin{theorem}\label{theo:credalmap}  Consider a CSDD and a node $n\neq \bot$ normalized for vtree node $v$ whose variables are $\{\bo{X}, \bo{E}\}$,  with $\bo{X}$ and $\bo{E}$ disjoint. Let $\bo{e}$ be a total evidence over variables $\bo{E}$. Then:
\begin{equation}
M(n) = \mathop{\max}_{\bo{x}\in \bm{\mathcal{X}} } \overline{\mathbb{P}}_n(\bo{x},\bm{e})\,.
\end{equation}
\end{theorem}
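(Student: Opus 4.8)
The plan is to proceed by induction on the topological order of the nodes, exactly mirroring the structure of the proof of Theorem~\ref{theo:map} for precise PSDDs but keeping track of the upper envelope over all compatible PSDDs. First I would fix a node $n\neq\bot$ normalized for vtree $v$ with variables $\{\bo{X},\bo{E}\}$ and a total evidence $\bo{e}$ over $\bo{E}$, and unfold the definition of $\overline{\mathbb{P}}_n$ as a maximum over the strong extension $\mathbb{K}^n$. Since the extreme points of $\mathbb{K}^n$ are the joint PMFs induced by the PSDDs compatible with the CSDD restricted to the sub-CSDD rooted at $n$, we have
\begin{equation}
\mathop{\max}_{\bo{x}\in\bm{\mathcal{X}}}\overline{\mathbb{P}}_n(\bo{x},\bo{e})
= \mathop{\max}_{\bo{x}\in\bm{\mathcal{X}}}\;\mathop{\max}_{\text{PSDD }n'\text{ compatible}}\mathbb{P}_{n'}(\bo{x},\bo{e})
= \mathop{\max}_{\text{PSDD }n'\text{ compatible}}\;\mathop{\max}_{\bo{x}\in\bm{\mathcal{X}}}\mathbb{P}_{n'}(\bo{x},\bo{e}),
\end{equation}
so the swap of the two maxima reduces the claim to showing that $M(n)$ equals the best-over-all-compatible-PSDDs value of the quantity that Theorem~\ref{theo:map} computes for a single PSDD.

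Next I would treat the base case. If $n$ is a terminal node with variable $X$, then there is nothing to combine: if $X\in\bo{E}$ the sub-CSDD carries no free parameter along that variable, and $M(n)=\overline{\mathbb{P}}_n(\bo{e}_v)$ is immediately the required maximum; if $X\in\bo{X}$ then $n\in\{X,\neg X\}$ gives value $1$ trivially, while $n=(X,[l,u])$ gives, over the choices $X=\top$ (probability in $[l,u]$) and $X=\bot$ (probability in $[1-u,1-l]$), the maximum $\max\{u,1-l\}$, matching the algorithm. For the inductive step, let $n=\{(p_i,s_i,\theta_i)\}_{i=1}^k$ be a decision node. Here I would use the PSDD recursion $\mathbb{P}_n(\bo{x},\bo{e})=\theta_{i(\bo{x})}\cdot\mathbb{P}_{p_{i(\bo{x})}}(\bo{x}_{v^l},\bo{e}_{v^l})\cdot\mathbb{P}_{s_{i(\bo{x})}}(\bo{x}_{v^r},\bo{e}_{v^r})$, where $i(\bo{x})$ is the unique prime consistent with $\bo{x}$. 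Because the primes partition the instantiations, maximizing over $\bo{x}$ decomposes as a maximum over the index $i$ of a product of three independent factors: the parameter $\theta_i$, the best value achievable in the $p_i$-subdiagram, and the best value achievable in the $s_i$-subdiagram. Since the local CS $\mathbb{K}_n(P)$ and the CSs in the two subdiagrams are chosen independently (strong extension), each factor may be optimized separately: the first yields $\up{\theta_i}=\max_{\mathbb{K}_n}\theta_i$, and the latter two yield $M(p_i)$ and $M(s_i)$ by the induction hypothesis (note the subdiagram variables of $p_i$, $s_i$ are again split disjointly into queried and evidence parts, so the hypothesis applies verbatim). Hence $M(n)=\max_{1\le i\le k}\up{\theta_i}\,M(p_i)\,M(s_i)$, which is precisely the algorithm's update.

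The main obstacle I anticipate is the independence/decoupling argument in the decision-node step, specifically justifying that the three factors can be optimized independently even when the underlying SDD is multiply connected — a shared sub-CSDD $m$ could in principle appear both in the $p_i$-subdiagram and the $s_j$-subdiagram, forcing a single choice of extreme point in $\mathbb{K}_m$. I would handle this the same way the discussion after Theorem~\ref{theo:evidence} handles it for Algorithm~\ref{alg:lowerlik}: observe that $m$ is a topological predecessor of $n$, so $M(m)$ is already fixed once and for all before $n$ is processed, and it enters the computations of the two branches only as a multiplicative coefficient; the value of $M(m)$ is itself the unconstrained maximum over $\mathbb{K}_m$, and since all coefficients here are nonnegative, using that same maximizer simultaneously in both branches is both feasible and optimal. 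Thus no spurious relaxation is introduced and, unlike the conditional-query algorithm of Section~\ref{sec:cond}, Algorithm~\ref{alg:mapcsdd} is exact with no topological restriction — which is consistent with the theorem statement containing no singly-connected hypothesis. A minor secondary point to check carefully is that $\up{\theta_i}=0$ exactly when $s_i=\bot$, so that branches through $\bot$ contribute $0$ and do not interfere, matching the convention $M(\bot)=0$ implicit in the algorithm.
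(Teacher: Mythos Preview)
Your proposal is correct and follows essentially the same route as the paper: induction on the node, reduction to extreme points of the strong extension (i.e., compatible PSDDs), use of the prime-partition to split the max over $\bo{x}$ into a max over the branch index $i$, and then independent optimization of $\theta_i$, the $p_i$-factor, and the $s_i$-factor. Your treatment of the multiply-connected case is more explicit than the paper's (which simply performs the split without comment); note, however, that a sub-CSDD cannot be shared between some $p_i$ and some $s_j$ because primes and subs are normalized for the disjoint subtrees $v^l$ and $v^r$, so the only possible sharing is among different primes or among different subs---and there your ``same maximizer is feasible and optimal'' argument is exactly what is needed.
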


Algorithm \ref{alg:robustness}  is used to decide the robustness of a MAP inference $\bo{x}^* \in \arg \mathop{\max}_{\bm{x}\in \bm{\mathcal{X}}}\mathbb{P}_r(\bm{x},\bm{e})$ in PSDD $r$ in the following way. Following a topological order, each node $n$ is processed and gives message $V(n)$, which is a relaxed version of the left-hand side of Equation \ref{eq:defrob}, in which we do not require the configurations $\bm{x}\in \bm{\mathcal{X}}$ to be distinct from $\bm{x}^*$ (with the adequate restrictions to $n$'s variables). Observe that the message of decision nodes $n$ not realized by $(\bm{x}^*_v,\bm{e}_v)$ is $0$. In fact, this value does not matter: the contribution of such nodes will be taken into account - as Credal-MAP message - when processing the first higher decision node consistent with (the adequate restriction of) $(x^*,\bm{e})$.

Because of the previously relaxed constraint, the message of the root $V(r)$ is greater or equal than $1$. If $V(r) > 1$, we can conclude that $\bo{x}^*$ is not robust. If $V(r)=1$, we need to re-take into account the constraint. In order to do so, we observe:

\begin{itemize}
\item if $\bo{x}^*$ is the only configuration realizing the maximum, we can state its robustness;
\item if $\bo{x}^*$ is between several configurations realizing the maximum, we can say that it is \textit{weakly} robust;
\item if $\bo{x}^*$ does not realize the maximum, we conclude that it is not robust.
\end{itemize}

Note that Equation \ref{eq:defrob} holds if and only if the first situation occurs.







\begin{algorithm}[htp]
    \caption{Robustness}
    \label{alg:robustness}
    \begin{algorithmic} 
        \STATE {\bf input:} CSDD $r$, evidence $\mathbf{e}$, $\bo{x}^* \in arg \mathop{\max}_{\bo{x}\in val(\bo{X})}\mathbb{P}_r(\bo{x},\bm{e})$
        \FOR{$n \leftarrow 1,\ldots,N$}
        \STATE{ $V(n) \leftarrow 1$}
        \STATE{$v\leftarrow $ leaf vtree node that $n$ is normalized for}

\IF{node $n$ is terminal}
        \IF{$n= (X : [l,u])$}
             \STATE{$V(n) \leftarrow \max\{1, \tfrac{1-l}{l}\}$ when $\bo{x}_v^*= \top$} 
             \STATE{$V(n) \leftarrow  \max\{1, \tfrac{u}{1-u}\}$ when $\bo{x}_v^*= \bot$}
        \ENDIF
    

\ELSE
        \STATE{ $(\{(p_i,s_i)\}_{i=1}^k,\mathbb{K}_n(P)) \leftarrow n$ (decision node)} 
        \IF{$x^*_v\bo{e}_v \models \interp{n}$}
             \STATE{$V(n)\leftarrow \max \{ V(p_j)\cdot V(s_j), \max_{1\leq i \leq k, i \neq j} U_{i,j}\} $} // $j$ is the unique index such that $\bo{x}_{v,l}^*\bo{e}_{v,l}\models \interp{p_j}$  and 
             $U_{i,j} := \max_{\mathbb{K}_n}\tfrac{\theta_i \cdot M(p_i)\cdot M(s_i)}{\theta_j \cdot \underline{P}_{p_j}(\bo{x}_{v,l}^*,\bo{e}_{v,l})\cdot \underline{P}_{s_j}(\bo{x}_{v,r}^*,\bo{e}_{v,r})}$
       \ENDIF
\ENDIF
\ENDFOR
\STATE {\bf output:} $V(N)$
    \end{algorithmic}
\end{algorithm}

The following theorem states the correctness of Algorithm \ref{alg:robustness} for singly connected CSDDs.

\begin{theorem}\label{theo:rob}  Let $r$ be a singly connected CSDD over variables $\{\bo{X}, \bo{E}\}$, with $\bo{X}$ and $\bo{E}$ disjoint. Consider an evidence $\bo{e}$ over variables $\bo{E}$ and an instance $\bo{x}^* \in \arg \mathop{\max}_{\bo{x}\in \bm{\mathcal{X}}}\mathbb{P}_r(\bo{x},\bm{e})$ obtained by applying Algorithm \ref{alg:map} to a consistent PSDD. For each node $n\neq \bot$ in $r$ normalized for vtree node $v$:
\begin{equation}
V(n) = \mathop{\max}_{\bo{x}_v\in val(\bo{X}_v)} \mathop{\max}_{\mathbb{P} \in \mathbb{K}^n} \frac{\mathbb{P}(\bo{x}_v, \bo{e}_v)}{\mathbb{P}(\bo{x}_v^*, \bo{e}_v)}\,.
\end{equation}
\end{theorem}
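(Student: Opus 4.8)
The plan is to prove the identity by induction on the topological order of the nodes of the singly connected CSDD, mirroring the structure of the proof of Theorem \ref{theo:conditional} but now with maximizations instead of minimizations and a ratio (rather than a difference) as the target quantity. Throughout I would write $(\bo{x}_v^*,\bo{e}_v)$ for the restriction of $(\bo{x}^*,\bo{e})$ to the variables of $v$, and I would use the fact (from Section \ref{sec:psdd}) that, for a singly connected CSDD, every node $n$ has a \emph{unique} context, so the local CS $\mathbb{K}^n$ is an honest conditional credal set and the extreme points of $\mathbb{K}^n$ factor as products of the extreme points of the local CSs of the sub-nodes of $n$; this is exactly what makes the local LP/maximization at each decision node legitimate and is where single-connectedness is used. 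A preliminary observation I would record is that, since $\bo{x}^*$ is obtained by Algorithm \ref{alg:map} on a consistent PSDD, we have $\mathbb{P}(\bo{x}_v^*,\bo{e}_v)>0$ for every $\mathbb{P}\in\mathbb{K}^n$ by the Base Theorem (Theorem \ref{theo:base}), so the ratio in the statement is always well defined.

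\emph{Base case (terminal nodes $n\neq\bot$).} Here $v$ is a leaf with variable $X$, and $\bo{X}_v$ is either empty (if $X\in\bo{E}$) or the single variable $X$. If $X\in\bo{E}$ there is nothing to maximize over $\bo{x}_v$, the numerator and denominator coincide, the ratio is $1$, and indeed $V(n)=1$ by the initialization. If $X\in\bo{X}$, then for $n\in\{X,\neg X\}$ the CS is degenerate and the only admissible $\bo{x}_v$ with positive probability is the one forced by the literal, so again the ratio is $1=V(n)$. The substantive case is $n=(X:[l,u])$: the maximum over $\mathbb{P}\in\mathbb{K}^n$ of $\mathbb{P}(\bo{x}_v)/\mathbb{P}(\bo{x}_v^*)$ is attained at an extreme point of $[l,u]$, and a direct case split on whether $\bo{x}_v^*=\top$ (optimal $\mathbb{P}(\top)=l$, competing state $\bot$ with $\mathbb{P}(\bot)=1-l$) or $\bo{x}_v^*=\bot$ (optimal $\mathbb{P}(\bot)=1-u$, competing state with $\mathbb{P}(\top)=u$) reproduces exactly the two formulas $\max\{1,(1-l)/l\}$ and $\max\{1,u/(1-u)\}$ assigned by the algorithm.

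\emph{Inductive step (decision node $n=\{(p_i,s_i)\}_{i=1}^k$ with CS $\mathbb{K}_n(P)$).} Let $(\bo{X}_v,\bo{E}_v)=(\bo{X}_{v^l},\bo{E}_{v^l})\cup(\bo{X}_{v^r},\bo{E}_{v^r})$. Writing any $\mathbb{P}\in\mathbb{K}^n$ as $\mathbb{P}_n(\bo{x}_v,\bo{e}_v)=\theta_i\,\mathbb{P}_{p_i}(\bo{x}_{v^l},\bo{e}_{v^l})\,\mathbb{P}_{s_i}(\bo{x}_{v^r},\bo{e}_{v^r})$, where $i$ is the unique index with $\bo{x}_{v^l}\models\interp{p_i}$, and using single-connectedness to optimize the three factors independently, the target quantity becomes
\begin{equation*}
\max_{\bo{x}_v}\ \max_{\mathbb{P}}\ \frac{\theta_i\,\mathbb{P}_{p_i}(\bo{x}_{v^l},\bo{e}_{v^l})\,\mathbb{P}_{s_i}(\bo{x}_{v^r},\bo{e}_{v^r})}{\theta_j\,\mathbb{P}_{p_j}(\bo{x}_{v^l}^*,\bo{e}_{v^l})\,\mathbb{P}_{s_j}(\bo{x}_{v^r}^*,\bo{e}_{v^r})}\,,
\end{equation*}
where $j$ is the unique prime index compatible with $\bo{x}_v^*$ (which exists because $\bo{x}_v^*\bo{e}_v\models\interp{n}$; if this fails, the numerator-side constraint $\bo{x}_v\bo{e}_v\models\interp{n}$ is vacuous and the relaxed message is $0$, matching the algorithm's $V(n)\leftarrow 0$ initialization for such nodes). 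Now split on whether the maximizing $\bo{x}_v$ uses prime $p_j$ or some $p_i$ with $i\neq j$. In the former case, $\bo{x}_{v^l}\models\interp{p_j}$, the $\theta$-factors cancel, and the two remaining maximizations are precisely $V(p_j)$ and $V(s_j)$ by the induction hypothesis, giving the term $V(p_j)\cdot V(s_j)$. In the latter case, for a fixed $i\neq j$ the numerator is maximized by taking the largest admissible $\theta_i$, the largest $\mathbb{P}_{p_i}(\bo{x}_{v^l},\bo{e}_{v^l})$ over $\bo{x}_{v^l}\models\interp{p_i}$ and over $\mathbb{K}^{p_i}$ — which is exactly $M(p_i)=\max_{\bo{x}}\up{\mathbb{P}}_{p_i}(\bo{x},\bo{e}_{v^l})$ from Theorem \ref{theo:credalmap} — and likewise $M(s_i)$, while the denominator is minimized by the smallest admissible $\theta_j$ and the smallest $\mathbb{P}_{p_j},\mathbb{P}_{s_j}$, i.e.\ $\lo{\mathbb{P}}_{p_j}(\bo{x}_{v^l}^*,\bo{e}_{v^l})$ and $\lo{\mathbb{P}}_{s_j}(\bo{x}_{v^r}^*,\bo{e}_{v^r})$; but $\up{\theta_i}$ and $\lo{\theta_j}$ cannot in general be chosen independently, so this joint optimization over $\mathbb{K}_n(P)$ of the ratio $\theta_i/\theta_j$ must be kept as the single linear-fractional program $U_{i,j}=\max_{\mathbb{K}_n}\tfrac{\theta_i M(p_i)M(s_i)}{\theta_j\,\lo{\mathbb{P}}_{p_j}(\cdot)\lo{\mathbb{P}}_{s_j}(\cdot)}$ appearing in the algorithm. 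Taking the max over the two cases and over $i\neq j$ yields $V(n)=\max\{V(p_j)V(s_j),\ \max_{i\neq j}U_{i,j}\}$, as claimed.

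The main obstacle, and the point deserving the most care, is the interchange of the two maximizations (over configurations $\bo{x}_v$ and over the credal set) together with the claim that the per-factor optimizations decouple. For the $i\neq j$ branch the numerator and denominator are optimized by genuinely different extreme points of different local CSs, and one must argue that in a singly connected CSDD no local CS is reused along the two sides, so no spurious coupling constraint is lost — this is precisely the hypothesis that fails in the multiply connected case and the reason the theorem is stated only for singly connected diagrams. A secondary subtlety is the $\theta_i/\theta_j$ coupling inside $\mathbb{K}_n(P)$ itself: one should note that $U_{i,j}$ is a maximization of a linear-fractional objective over the polytope $\mathbb{K}_n(P)$, hence attained at a vertex and computable by standard LP (e.g.\ via a Charnes–Cooper-type reparametrization), so that the overall procedure remains polynomial; and one must handle the degenerate possibility $\lo{\mathbb{P}}_{p_j}(\bo{x}_{v^l}^*,\bo{e}_{v^l})=0$ or $\lo{\theta_j}=0$, which is excluded here because $\bo{x}^*$ comes from a MAP computation on a PSDD consistent with the CSDD and all subs along the chosen branch are $\neq\bot$, so the relevant lower probabilities are strictly positive by Theorem \ref{theo:base}.
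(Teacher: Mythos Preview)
Your proof is correct and follows the same inductive scheme as the paper: the same terminal-node case analysis and the same split at decision nodes into the $i=j$ branch (giving $V(p_j)V(s_j)$) and the $i\neq j$ branch (giving $U_{i,j}$), with single-connectedness invoked exactly where the paper does to decouple the optimizations over the sub-CSDDs. One small slip: Algorithm~\ref{alg:robustness} initializes $V(n)\leftarrow 1$, not $0$, and when $\bo{x}_v^*\bo{e}_v\not\models\interp{n}$ it is the \emph{denominator} that vanishes (so the ratio is undefined and the convention assigns it value $1$); as the paper itself remarks, this value is immaterial for the final output, so the slip does not affect your argument.
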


The motivations for which we are not in measure to state the theorem for general CSDDs are analogous to the ones for conditional inference. In the induction step of the previous proof, in the case of $i\neq j$, we perform a maximization on the numerator and a minimization on the denominator, this being possible because nodes on the numerator and nodes on the denominator have distinct CSs. Nevertheless, this does not prevent the algorithm from selecting several distinct optimal sub-configurations in the case of a multiple node possibly shared by $p_i$ and $p_j$, or $s_i$ and $s_j$, when the CSDD is multiply connected. Thus, exactly as in the case of conditional queries, we obtain an outer approximation meaning that the output of Algorithm \ref{alg:robustness} might be greater than the left-hand side of Equation \eqref{eq:defrob}. In other words, for multiply connected models, if the algorithm says that the configuration is robust we are certain, while it might be the case that the algorithm says that the configuration is not robust, while this is not the case. This might be therefore intended as a conservative approximation. Finally, exactly as in the conditional case, we might decide whether or not the algorithm returned an approximation by simply inspecting the extreme points of the CSs with multiplicity higher than one leading to the optima of the linear programs solved during the execution of the algorithm and, in case of approximation, run a brute-force algorithm exponential in the number of CSs for which different tasks gave different optimal extreme points.
\section{Experiments}\label{sec:experiments}
As a first application of the algorithms derived in the
previous section, we consider a simple machine learning task
involving logical constraints over the model variables. The
problem consists in the identification of the digit depicted
by a seven-segment display (Figure \ref{fig:seven_segments}),
whose segments might occasionally fail to turn on. More
specifically, given an input digit to be displayed, the
control unit activates the corresponding set of segments in
the display; each segment can however fail to be switched on
independently with an identical probability.
We note that while this scenario is relatively simple, it
can easily be extended to more complex and realistic scenarios
involving a large number of components/devices, whose
interdependence is described as a logical function, and 
whose probability of failures are interconnected in a
complicated way.

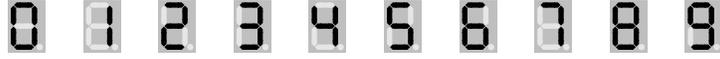
\begin{figure}[h]
\centering
\begin{circuitikz}[scale=1.]
\ctikzset{seven seg/width=0.2, seven seg/thickness=2pt}
\ctikzset{seven seg/color on=black, seven seg/color off=black!10}
\foreach \i in {0,...,9} \path (\i,-1.5) node[seven segment val=\i dot off box off, fill=gray!50!white]{};
\end{circuitikz}
\caption{Digits represented by a seven-segment display\label{fig:seven_segments}}
\end{figure}

Our setup can be described by fourteen Boolean variables: say that $\bm{X}:=(X_1,\ldots,X_7)$ are the  \emph{hidden} states of the segments as decided by the control unit, and $\bm{O}:=(O_1,\ldots,O_7)$ are the \emph{observable} states of the segments as depicted in the display. Let us also assume that the true state of these Boolean variables corresponds to the segment on.

We create synthetic data as follows. Given digit $j$, the corresponding configuration of $\bm{X}$ is provided by the formula $\delta_j(\bm{X})$ as in Table \ref{tab:digits}. Then, for each $i=1,\ldots,7$, if $X_i$ is false, we also set $O_i$ false, while if $X_i$ is true, $O_i$ might be false with a given failure probability $p_f$. Such mechanism obeys the formula:
\begin{equation}\label{eq:digits_formula}
\phi := \wedge_{i=1}^7 (O_i \rightarrow X_i) \wedge \left( \vee_{j=0}^{9} \delta_j(X_1,\ldots,X_7) \right)\,.
\end{equation}

\begin{table}[htp!]
\centering
\begin{tabular}{lc}
\hline
$j$&$\delta_j(\bm{X})$\\
\hline
$0$ & $X_1 \wedge X_2 \wedge X_3 \wedge X_4 \wedge X_5 \wedge X_6 \wedge \neg X_7$\\
$1$ & $\neg X_1 \wedge X_2 \wedge X_3 \wedge \neg X_4 \wedge \neg X_5 \wedge \neg X_6 \wedge \neg X_7$\\
$2$ & $X_1 \wedge X_2 \wedge \neg X_3 \wedge X_4 \wedge X_5 \wedge \neg X_6 \wedge X_7$\\
$3$ & $X_1 \wedge X_2 \wedge X_3 \wedge X_4 \wedge \neg X_5 \wedge \neg X_6 \wedge X_7$\\
$4$ & $\neg X_1 \wedge X_2 \wedge X_3 \wedge \neg X_4 \wedge \neg X_5 \wedge X_6 \wedge X_7$\\
$5$ & $X_1 \wedge \neg X_2 \wedge X_3 \wedge X_4 \wedge \neg X_5 \wedge X_6 \wedge X_7$\\
$6$ & $X_1 \wedge \neg X_2 \wedge X_3 \wedge X_4 \wedge X_5 \wedge X_6 \wedge X_7$\\
$7$ & $X_1 \wedge X_2 \wedge X_3 \wedge \neg X_4 \wedge \neg X_5 \wedge \neg X_6 \wedge \neg X_7$\\
$8$ & $X_1 \wedge X_2 \wedge X_3 \wedge X_4 \wedge X_5 \wedge X_6 \wedge \neg X_7$\\
$9$ & $X_1 \wedge X_2 \wedge X_3 \wedge X_4 \wedge X_5 \wedge X_6 \wedge X_7$\\
$0$ & $X_1 \wedge X_2 \wedge X_3 \wedge X_4 \wedge \neg X_5 \wedge X_6 \wedge X_7$\\
\hline
\end{tabular}
\caption{Digits configuration as disjunctive formulae\label{tab:digits}}
\end{table}

Given formula $\phi$ in Equation \eqref{eq:digits_formula}, we use the algorithm proposed in \cite{choi2013} to build an SDD $\alpha$ normalized for a vtree such that, for each $i=1,\ldots,7$, the pair $(X_i,O_i)$ corresponds to a pair of leaves with the same parent and with a so-called \emph{balanced} shape. The resulting SDD has a multiply connected structure, 128 nodes (82 of them decision nodes) and maximum number of elements for decision node equal to eight.

Given a training data set $\mathcal{D}$ of size $d$, generated according to the above described procedure, we can obtain from $\alpha$ a PSDD or a CSDD. In the first case we use a Bayesian procedure, with Perks' prior and equivalent sample size $s=1$, to learn PMFs associated with the decision nodes and the non-bot terminal nodes. In the second case, IDM with the same equivalent sample size is used to learn the CSs. 

As a rival setup we consider a \emph{hidden Markov model} (HMM) whose hidden variables are those in $\bm{X}$, while the observations are those in $\bm{O}$. The model is trained from the same data set $\mathcal{D}$ and with he same prior as the PSDD. A credal extension of HMMs, perfectly analogous to the one we presented here for PSDDs, have been proposed in \cite{maua2016hidden}. Thus, we can also quantify the HMM parameters as CSs obtained by IDM with the same equivalent sample size. We refer to this model as IHMM, while HMM is its precise counterpart.

Given a test instance $(\bm{x}',\bm{o}')$, generated by the same mechanism discussed for the training set, we therefore have four different models to perform reasoning. As a first task, we predict, given the observation $\bm{o}'$, the most probable configuration of $X_i'$ for each $i=1,\ldots,7$. In the PSDD, this is prediction is driven by the conditional inference $P(X_i'=1|\bm{o}')$. The same can be done with the HMM by the classical \emph{filtering} algorithm (we create a different HMM for each $i$ such that $X_i$ and $O_i$ are always the last elements of the sequence). For the CSDD, Algorithm \ref{alg:lowercon} is used instead to compute posterior intervals $[\underline{P}(X_i'=1|\bm{o}'),\overline{P}(X_i'=1|\bm{o}')]$, while the same task can be solved in polynomial time also in IHMMs by the (credal) filtering algorithm proposed in \cite{maua2016hidden}. With 0/1 losses, the rule to decide whether or not the segment $X_i'$ is on according to a PSDD or HMM is simply whether or not the probability of the true state is larger than half, the segment being off otherwise. For CSDDs and IHMMs, we say that the segment is certainly on, if the lower conditional probability is more than half, and certainly off if the upper probability is less than half. If none of the two above cases is satisfied, we say that we are in a condition of \emph{indecision} between the two options. This is an example of so-called \emph{credal classifier} \cite{zaffalon2002naive}, which suspends the judgement about the actual state of the segment when the available information is not sufficient to take a determinate decision.

In summary, given $\bm{o}'$, we classify each segment
separately by using: (i) PSDDs and HMMs as standard
classifiers, whose performance is described by the accuracy,
i.e., the percentage of segments whose state was properly
recognized; (ii) CSDDs and IHMMs as credal classifiers, whose
performance is described by the $u_{80}$ utility-based
performance measure, which is commonly used to evaluate the
performance of a credal classifiers as it more properly balances the quality of the prediction and the lack of informativeness associated to indeterminate classifications and it is considered a proper measure to compare the performance of credal classifiers against the accuracy of a standard classifier \cite{zaffalon2012evaluating}.

In our experiments we consider training sets of size $d\in\{10,15,20,50,100\}$ and test the four models trained with these data with a test set of size $d'=140$. Different failure probabilities $p_f \in \{0.05,0.1,0.2,0.3,0.4\}$ are also considered. 

The CSDD inference algorithms have been implemented by the authors  in Python together with the necessary data structures.\footnote{\url{https://github.com/alessandroantonucci/pycsdd}} The PySDD library was used to build the SDDs associated with a formula.\footnote{\url{https://github.com/wannesm/PySDD}} The PyPSDD library was used instead to validate the consistency between PSDDs and CSDDs.\footnote{\url{https://github.com/art-ai/pypsdd}}. The iHMM library was finally used instead for experiments with HMMs/IHMMs.\footnote{\url{https://github.com/denismaua/ihmm}}

\begin{figure}[htp!]
    \centering
    \resizebox{6cm}{6cm}{
        \begin{tikzpicture}
        \begin{axis}[title={d=10},xlabel={$p_f$},ylabel={accuracy}]
        \addplot[color=red,dashed] table [x=pf,y=psdd,col sep=comma] {acc1.csv};
        \addplot[color=red] table [x=pf,y=csdd,col sep=comma] {acc1.csv};
        \addplot[color=blue,dashed] table [x=pf,y=hmm,col sep=comma] {acc1.csv};
        \addplot[color=blue] table [x=pf,y=ihmm,col sep=comma] {acc1.csv};
        \legend{PSDD,CSDD,HMM,IHMM}
        \end{axis}
        \end{tikzpicture}}
    \resizebox{6cm}{6cm}{
    \begin{tikzpicture}
    \begin{axis}[title={d=15},xlabel={$p_f$},ylabel={accuracy}]
    \addplot[color=red,dashed] table [x=pf,y=psdd,col sep=comma] {acc2.csv};
    \addplot[color=red] table [x=pf,y=csdd,col sep=comma] {acc2.csv};
    \addplot[color=blue,dashed] table [x=pf,y=hmm,col sep=comma] {acc2.csv};
    \addplot[color=blue] table [x=pf,y=ihmm,col sep=comma] {acc2.csv};
    \legend{PSDD,CSDD,HMM,IHMM}
    \end{axis}
    \end{tikzpicture}}
    \resizebox{6cm}{6cm}{
    \begin{tikzpicture}
    \begin{axis}[title={d=20},xlabel={$p_f$},ylabel={accuracy}]
    \addplot[color=red,dashed] table [x=pf,y=psdd,col sep=comma] {acc3.csv};
    \addplot[color=red] table [x=pf,y=csdd,col sep=comma] {acc3.csv};
    \addplot[color=blue,dashed] table [x=pf,y=hmm,col sep=comma] {acc3.csv};
    \addplot[color=blue] table [x=pf,y=ihmm,col sep=comma] {acc3.csv};
    \legend{PSDD,CSDD,HMM,IHMM}
    \end{axis}
    \end{tikzpicture}}
    \resizebox{6cm}{6cm}{
    \begin{tikzpicture}
    \begin{axis}[title={d=50},xlabel={$p_f$},ylabel={accuracy}]
    \addplot[color=red,dashed] table [x=pf,y=psdd,col sep=comma] {acc4.csv};
    \addplot[color=red] table [x=pf,y=csdd,col sep=comma] {acc4.csv};
    \addplot[color=blue,dashed] table [x=pf,y=hmm,col sep=comma] {acc4.csv};
    \addplot[color=blue] table [x=pf,y=ihmm,col sep=comma] {acc4.csv};
    \legend{PSDD,CSDD,HMM,IHMM}
    \end{axis}
    \end{tikzpicture}}
    \resizebox{6cm}{6cm}{
    \begin{tikzpicture}
    \begin{axis}[title={d=100},xlabel={$p_f$},ylabel={accuracy}]
    \addplot[color=red,dashed] table [x=pf,y=psdd,col sep=comma] {acc5.csv};
    \addplot[color=red] table [x=pf,y=csdd,col sep=comma] {acc5.csv};
    \addplot[color=blue,dashed] table [x=pf,y=hmm,col sep=comma] {acc5.csv};
    \addplot[color=blue] table [x=pf,y=ihmm,col sep=comma] {acc5.csv};
    \legend{PSDD,CSDD,HMM,IHMM}
    \end{axis}
    \end{tikzpicture}}
\caption{Accuracies}\label{fig:acc}
\end{figure}
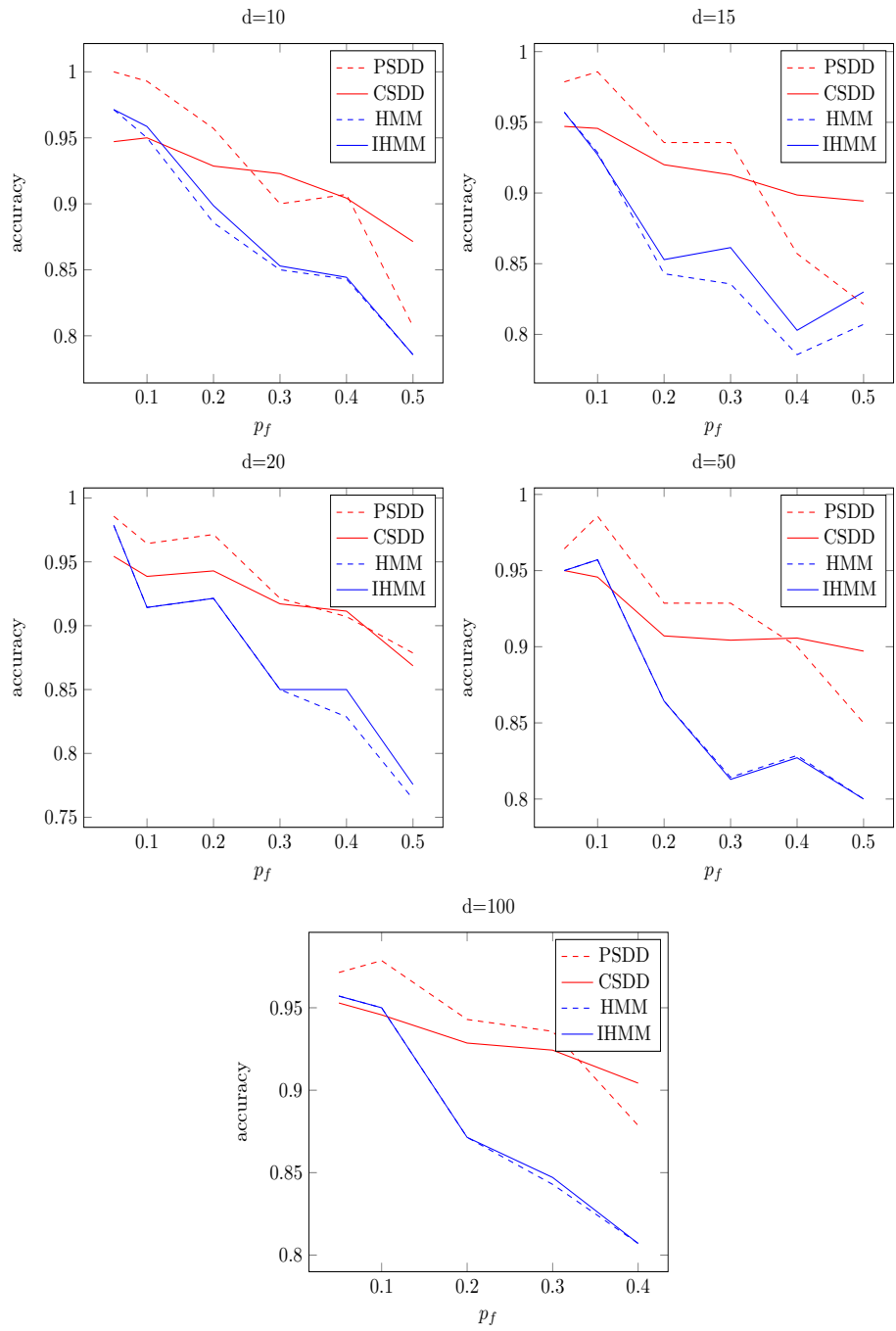

Figure \ref{fig:acc} depicts five plots showing the accuracies of the four different models as a function of $p_f$ for different training set sizes $d$. The behaviour is clear PSDDs/CSDDs models outperform HMMs/IHMMs most of the times, with the differences being typically narrower for low failure probabilities. This is expected and the gap between the two models should be intended as the effect of the additional information about the logical constraints in Equation \ref{eq:digits_formula}, that is not available to the HMMs/IHMMs. The smaller gap for low failure probabilities can be also explained by noticing that the emission term $P(O_j|X_j)$ involved in the parametrization of HMMs/IHMMs takes almost diagonal form for low failure probabilities and, in these cases, the observation of $O_j$ induces a high probability for the same state of $X_j$, thus making irrelevant the effect of the logical constraints. Moreover, we notice that the CSDD tends to outperform the PSDD for larger failure probabilities. This is also expected: increasing the noise level in the data promptly induces a degradation of the PSDD accuracy, while the CSDD is able to contain that effect by allowing for indeterminate classifications of some segments.

Credal classifier are typically used as preprocessing systems able to distinguish easy-to-classify instances for which the output of the standard method is considered sufficiently reliable, from the hard-to-classify ones, for which other dedicated and typically more demanding/expensive techniques should be invoked. Such a separation is naturally provided by the classifier, as it corresponds to the difference between the instances for which the output of the classifier is determinate and the other ones. A typical description of such discriminative power is the difference between the accuracies of the precise counterpart of a credal classifier on these two sets of instances. In Figure \ref{fig:discr}, we plot the so-called \emph{determinate} and \emph{indeterminate accuracies} of the PSDD, i.e., the accuracy of the PSDD on the instances (i.e., segments) for which the credal classifier was determinate or indeterminate. As expected, the CSDD is properly able to distinguish these two sets and keeps a level of accuracy very close to one even for high perturbation levels (the perturbation only affecting the determinacy, i.e., the percentage of determinate classifications).

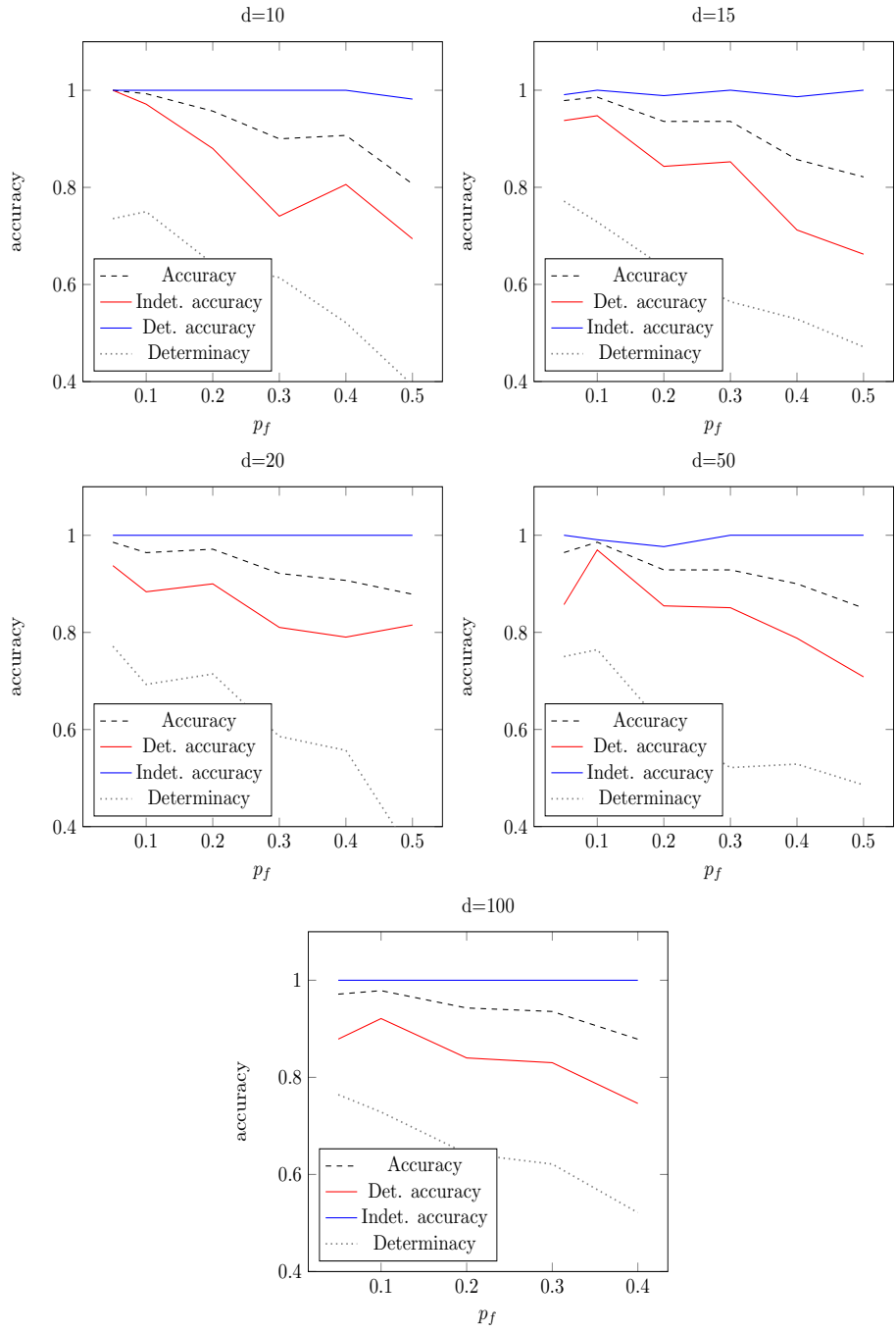
\begin{figure}[htp!]
    \centering
    \resizebox{6cm}{6cm}{
        \begin{tikzpicture}
        \begin{axis}[title={d=10},xlabel={$p_f$},ylabel={accuracy},legend pos={south west},ymin=0.4,ymax=1.1]
        \addplot[color=black,dashed] table [x=pf,y=acc,col sep=comma] {discr1.csv};
        \addplot[color=red] table [x=pf,y=det_acc,col sep=comma] {discr1.csv};
        \addplot[color=blue] table [x=pf,y=indet_acc,col sep=comma] {discr1.csv};
        \addplot[color=gray,dotted,thick] table [x=pf,y=det,col sep=comma] {discr1.csv};
        \legend{Accuracy,Indet. accuracy,Det. accuracy,Determinacy}
        \end{axis}
        \end{tikzpicture}}
    \resizebox{6cm}{6cm}{
    \begin{tikzpicture}
    \begin{axis}[title={d=15},xlabel={$p_f$},ylabel={accuracy},legend pos={south west},ymin=0.4,ymax=1.1]
    \addplot[color=black,dashed] table [x=pf,y=acc,col sep=comma] {discr2.csv};
    \addplot[color=red] table [x=pf,y=det_acc,col sep=comma] {discr2.csv};
    \addplot[color=blue] table [x=pf,y=indet_acc,col sep=comma] {discr2.csv};
    \addplot[color=gray,dotted,thick] table [x=pf,y=det,col sep=comma] {discr2.csv};
    \legend{Accuracy,Det. accuracy,Indet. accuracy,Determinacy}
    \end{axis}
    \end{tikzpicture}}
    \resizebox{6cm}{6cm}{
    \begin{tikzpicture}
    \begin{axis}[title={d=20},xlabel={$p_f$},ylabel={accuracy},legend pos={south west},ymin=0.4,ymax=1.1]
    \addplot[color=black,dashed] table [x=pf,y=acc,col sep=comma] {discr3.csv};
    \addplot[color=red] table [x=pf,y=det_acc,col sep=comma] {discr3.csv};
    \addplot[color=blue] table [x=pf,y=indet_acc,col sep=comma] {discr3.csv};
    \addplot[color=gray,dotted,thick] table [x=pf,y=det,col sep=comma] {discr3.csv};
    \legend{Accuracy,Det. accuracy,Indet. accuracy,Determinacy}
    \end{axis}
    \end{tikzpicture}}
    \resizebox{6cm}{6cm}{
    \begin{tikzpicture}
    \begin{axis}[title={d=50},xlabel={$p_f$},ylabel={accuracy},legend pos={south west},ymin=0.4,ymax=1.1]
    \addplot[color=black,dashed] table [x=pf,y=acc,col sep=comma] {discr4.csv};
    \addplot[color=red] table [x=pf,y=det_acc,col sep=comma] {discr4.csv};
    \addplot[color=blue] table [x=pf,y=indet_acc,col sep=comma] {discr4.csv};
    \addplot[color=gray,dotted,thick] table [x=pf,y=det,col sep=comma] {discr4.csv};
    \legend{Accuracy,Det. accuracy,Indet. accuracy,Determinacy}
    \end{axis}
    \end{tikzpicture}}
    \resizebox{6cm}{6cm}{
    \begin{tikzpicture}
    \begin{axis}[title={d=100},xlabel={$p_f$},ylabel={accuracy},legend pos={south west},ymin=0.4,ymax=1.1]
    \addplot[color=black,dashed] table [x=pf,y=acc,col sep=comma] {discr5.csv};
    \addplot[color=red] table [x=pf,y=det_acc,col sep=comma] {discr5.csv};
    \addplot[color=blue] table [x=pf,y=indet_acc,col sep=comma] {discr5.csv};
    \addplot[color=gray,dotted,thick] table [x=pf,y=det,col sep=comma] {discr5.csv};
    \legend{Accuracy,Det. accuracy,Indet. accuracy,Determinacy}
    \end{axis}
    \end{tikzpicture}}
    \caption{PSDD determinate vs. indeterminate accuracies}\label{fig:discr}
\end{figure}

Finally, for a validation of Algorithm \ref{alg:robustness}, we perform an analysis analogous to that in Figure \ref{fig:discr} but at the level of joint configuration of the hidden variables corresponding to a particular digit. In practice, we compute the MAP configuration of $\bm{X}=\bm{x}^*$ given $\bm{o}'$ in the PSDD and use Algorithm \ref{alg:robustness} to check whether or not the configuration was robust. The corresponding determinate and indeterminate, joint, accuracies are reported in Figure \ref{fig:discr2} only for $d\geq 20$ as for lower training set size the amount of detected digits is very low in both cases. As expected the behaviour is analogous to that in Figure \ref{fig:discr}.

\begin{figure}[htp!]
    \centering
    \resizebox{5cm}{5cm}{
        \begin{tikzpicture}
        \begin{axis}[title={d=20},xlabel={$p_f$},ylabel={accuracy},legend pos={south west},ymin=0.0,ymax=1.1]
        \addplot[color=black,dashed] table [x=pf,y=matching,col sep=comma] {joint3.csv};
        \addplot[color=red] table [x=pf,y=det_matching,col sep=comma] {joint3.csv};
        \addplot[color=blue] table [x=pf,y=indet_matching,col sep=comma] {joint3.csv};
        \addplot[color=gray,dotted,thick] table [x=pf,y=determinacy,col sep=comma] {joint3.csv};
        \legend{Accuracy,Det. accuracy,Indet. accuracy,Determinacy}
        \end{axis}
        \end{tikzpicture}}
    \resizebox{5cm}{5cm}{
        \begin{tikzpicture}
        \begin{axis}[title={d=100},xlabel={$p_f$},ylabel={accuracy},legend pos={south west},ymin=0.0,ymax=1.1]
        \addplot[color=black,dashed] table [x=pf,y=matching,col sep=comma] {joint5.csv};
        \addplot[color=red] table [x=pf,y=det_matching,col sep=comma] {joint5.csv};
        \addplot[color=blue] table [x=pf,y=indet_matching,col sep=comma] {joint5.csv};
        \addplot[color=gray,dotted,thick] table [x=pf,y=determinacy,col sep=comma] {joint5.csv};
        \legend{Accuracy,Det. accuracy,Indet. accuracy,Determinacy}
        \end{axis}
        \end{tikzpicture}}
    \caption{PSDD determinate vs. indeterminate joint accuracies}\label{fig:discr2}
\end{figure}
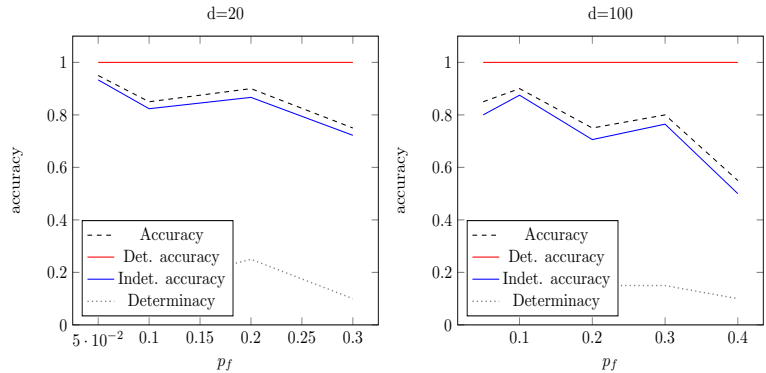
\section{Conclusions}\label{sec:conc}
We have introduced a new class of imprecise probabilistic graphical models based on a credal set extension of \emph{probabilistic sentential diagrams}. Three efficient algorithms for marginal, conditional and MAP queries are derived. The first algorithm is exact for any topology, while the second and the third might induce a conservative approximation in the multiply connected case. Yet, a fast procedure to test whether or not an approximation has been also derived. An empirical validation on a synthetic setup show that the credal extension allows to properly distinguish between easy-to-classify and hard-to-classify instances. Regarding the multiply connected case, whether or not for conditional queries and for the robustness of a MAP task, exact inferences can be efficiently computed remains an open question to be addressed as a future work.


\appendix
\section*{Proofs}
\begin{proof}[Proof of Theorem \ref{theo:map}] If $r$ is a terminal PSDD, it is easy to see the correctness of the algorithm. Suppose that $r$ is a decision node, $r = {(p_i,s_i,\theta_i)}_{i=1}^k$. For a given $\bo{x} \in val(\bo{X})$, $\bo{xe}$ is a total instantiation of its variables. By definition of PSDDs distribution, $\mathbb{P}_r(\bo{xe})= \sum_{i=1}^k \mathbb{P}_{p_i}(\bo{x}_l \bo{e}_l) \cdot \mathbb{P}_{s_i} (\bo{x}_l \bo{e}_r)\cdot \theta_i$. Now, remember that for each $\bo{x}$,  $\bo{x}_l\bo{e}_l$ realizes a unique prime, so this maximum is of the form $\mathbb{P}_{p_i}(\bo{x}_l \bo{e}_l) \cdot \mathbb{P}_{s_i} (\bo{x}_l \bo{e}_r)\cdot \theta_i$ for a unique $1\leq i \leq k$. Hence,

\begin{align*}
\mathop{\max}_{\bo{x}\in val(\bo{X})}\mathbb{P}_r(\bo{x},\bm{e}) & = \mathop{\max}_{1 \leq i \leq k} \mathop{\max}_{\bo{x}\in val(\bo{X})} \mathbb{P}_{p_i}(\bo{x}_l \bo{e}_l) \cdot \mathbb{P}_{s_i} (\bo{x}_r \bo{e}_r)\cdot \theta_i \\
& = \mathop{\max}_{1 \leq i \leq k} \theta_i \cdot [ \mathop{\max}_{\bo{x}_l\in val(\bo{X}_l)} \mathbb{P}_{p_i}(\bo{x}_l \bo{e}_l)] \cdot  [\mathop{\max}_{\bo{x}_r\in val(\bo{X}_r)} \mathbb{P}_{s_i}(\bo{x}_r \bo{e}_r)]\\
& = \mathop{\max}_{1 \leq i \leq k} \theta_i \cdot  MAP(p_i) \cdot  MAP(s_i)
\end{align*}
\end{proof}

\begin{proof}[Proof of Theorem \ref{theo:base}] 
Base case: Let $n$ be a terminal node normalized for leaf vtree node $v$. Let $X$ be the variable of leaf $v$ and $\bo{x}$ an instantiation of $X$. If $n=X$, on one hand $\top \models X$ and $\lo{\mathbb{P}}_X(\top)=1$, on the other hand $\bot \not\models X$ and  $\up{\mathbb{P}}_X(\bot)=0$. Similarly for $n = \neg X$. If $n = (X : [\alpha, \beta])$, $\lo{\mathbb{P}}_n(\top) = \alpha $ and $\lo{\mathbb{P}}_n(\bot) = 1-\beta$, which are both strictly positive, and remember that this node's interpretation is $\top$, so that both $\top$ and $\bot$ trivially model the node. Induction step: Let $v$ be an internal vtree node  and assume the statement of the theorem true for CSDD nodes normalized for $v$'s descendant. Let $n = (\{(p_i,s_i)\}_{i=1}^k, \mathbb{K}_n)$ be a decision node normalized for $v$. Let $\bo{X}$ and $\bo{Y}$ be the left respectively right variables of $v$. Now, for any instantiation $\bo{xy}$ of $\bo{XY}$:

\begin{align*}
\underline{\mathbb{P}}_n(\bo{xy}) =& \min_{\mathbb{P}(\bo{XY}) \in \mathbb{K}^n(\bo{XY})} \mathbb{P}(\bo{xy})  \\
& = \min_{ \mathbb{P}_n(\mathbf{XY}) \in \mathbb{K}^n(\mathbf{XY})} \mathop{\sum}_{i=1}^k \mathbb{P}_{p_i}(\bo{x})\cdot \mathbb{P}_{s_i}(\bo{y})\cdot \theta_i \\
& = \min_{ [\theta_1, \dots, \theta_k] \in \mathbb{K}_n(P)} \mathop{\sum}_{i=1}^k \underline{\mathbb{P}}_{p_i}(\bo{x})\cdot \underline{\mathbb{P}}_{s_i}(\bo{y})\cdot \theta_i.
 \end{align*}
 
Similarly, we can derive 
$$\up{\mathbb{P}}_n(\bo{xy}) =\max_{ [\theta_1, \dots, \theta_k] \in \mathbb{K}_n(P)} \mathop{\sum}_{i=1}^k \up{\mathbb{P}}_{p_i}(\bo{x})\cdot \up{\mathbb{P}}_{s_i}(\bo{y})\cdot \theta_i.$$
 
 We have that $\bo{xy} \models \interp{n}$ if and only if  $\bo{y} \models \interp{s_j}$ for the unique $1\leq j \leq k$ such that $\bo{x} \models \interp{p_j}$.  By induction hypothesis, this happens if and only if $\underline{\mathbb{P}}_{p_j}(\bo{x})\cdot \underline{\mathbb{P}}_{s_j}(\bo{y}) > 0$. This is equivalent to 
 $\min_{ [\theta_1, \dots, \theta_k] \in \mathbb{K}_n(P)} \mathop{\sum}_{i=1}^k \underline{\mathbb{P}}_{p_i}(\bo{x})\cdot \underline{\mathbb{P}}_{s_i}(\bo{y})\cdot \theta_i > 0$ (observe that, because $\bo{y} \models \interp{s_i}$,  $s_i \neq \bot$ and hence by definition $\theta_i $ is constrained to be strictly positive). Similarly $\bo{xy} \not\models \interp{n}$ if and only if $\bo{y} \not\models \interp{s_j}$ for the unique $1\leq j \leq k$ such that $\bo{x} \models \interp{p_j}$. By induction hypothesis, this happens if and only if $\up{\mathbb{P}}_{p_j}(\bo{x})\cdot \up{\mathbb{P}}_{s_j}(\bo{y}) = 0$. By definition of $j$ and by induction hypothesis, $\up{\mathbb{P}}_{p_i}(\bo{x}) = 0$ for all $i\neq j$, making $\max_{ [\theta_1, \dots, \theta_k] \in \mathbb{K}_n(P)} \mathop{\sum}_{i=1}^k \up{\mathbb{P}}_{p_i}(\bo{x})\cdot \up{\mathbb{P}}_{s_i}(\bo{y})\cdot \theta_i =0$.
\end{proof}

\begin{proof}[Proof of Theorem \ref{theo:evidence}]
If $n$ is a terminal node, the theorem is true by definition of Algorithm \ref{alg:lowerlik} (the computation of $\underline{\mathbb{P}}_n(\mathbf{e})$ is immediate). Let $n = (\{(p_i, s_i)\}_{i=1}^k, \mathbb{K}_n(P))$ be a decision node and assume that the theorem holds for $n$'s primes and subs. If  $l$ and $r$ are the left, respectively right sub-vtree of $v$, we have that:

\begin{align*}
\lo{\mathbb{P}}_n(\bo{e}) = & \min_{ \mathbb{P}(\mathbf{Z}) \in \mathbb{K}^n(\mathbf{Z})}\mathbb{P}(\bo{e})\\
& \stackrel{(1)}{=} \min_{ \mathbb{P}_n(\mathbf{Z}) \in \mathbb{K}^n(\mathbf{Z})} \mathop{\sum}_{i=1}^k \mathbb{P}_{p_i}(\bo{e}_l)\cdot \mathbb{P}_{s_i}(\bo{e}_r)\cdot \theta_i\\
& \stackrel{(2)}{=} \min_{ [\theta_1, \dots, \theta_k] \in \mathbb{K}_n(P)} \mathop{\sum}_{i=1}^k \min_{\mathbb{P}_{p_i}(\bo{Z}_l) \in \mathbb{K}^{p_i}}\mathbb{P}_{p_i}(\bo{e}_l)\cdot \min_{\mathbb{P}_{s_i}(\bo{Z}_r)\in \mathbb{K}^{s_i}}\mathbb{P}_{s_i}(\bo{e}_r)\cdot \theta_i\\
& \stackrel{(3)}{=} \min_{ [\theta_1, \dots, \theta_k] \in \mathbb{K}_n(P)} \mathop{\sum}_{i=1}^k \underline{\pi}(p_i)\cdot \underline{\pi}(s_i)\cdot \theta_i
\end{align*}
(1) is because optima are attained in extreme points, plus \cite[Theorem 7]{kisa2014}. In (2) we move the minimizations concerning  $\mathbb{P}_{p_i}(\bo{e}_l)$ and $ \mathbb{P}_{s_i}(\bo{e}_r)$ inside the sum. This can be done because these minimizations are done over two distinct CSs (the strong extension of the sub-CSDD rooted at $p_i$ and the strong extension of the sub-CSDD rooted at $s_i$).  and then, with the obtained values, solve the LP over the CS $\mathbb{K}_n(P)$ attached to node $n$. Hence, the induction hypothesis applies in (3), knowing again that the argument used in (1) applies to nodes $p_i$ and $s_i$, for all $1\leq i \leq k$.
\end{proof}

\begin{proof}[Proof of Theorem \ref{theo:conditional}]
Let $n$ be a node normalized for a vtree node $v$ in the input CSDD.
If $X$ does not occur in $v$, $\mathbb{P}(x,\bm{e}) = \mathbb{P}(\neg x,\bm{e}) = \mathbb{P}(\bm{e})$ for all $\mathbb{P}(\bm{X}) \in \mathbb{K}^n(\bm{X})$. The result of the right hand side minimization is then $(1-2\mu)\cdot \mathbb{P}(\bm{e})$, i.e., $\underline{\rho}_n(\mu)$.

Now assume that $X$ occurs in $v$.

If $v$ is a leaf, $n$ is a terminal node. As optimal values are attained on the borders of the domain, the left hand side of Equation \eqref{eq:gbr3} rewrites exactly as $\lambda_n(\mu)$. 
Hence, for a terminal node, the result of the right hand side minimization is $\Lambda(n)$, thus the base case is proved. Assume now that the Theorem is true for nodes normalized for $v$'s sub-vtrees.

Consider a decision node $n=(\{(p_i,s_i)\}_{i=1}^k, \mathbb{K}_n(P))$ (normalized for $v$) and assume that $X$ occurs in the left sub-vtree of $v$, $v^l$, the case when $X$ occurs in the right sub-vtree being \emph{mutatis mutandis} the same. The right hand side of the equality to be proven can be rewritten as
\small
\begin{align*}
\min_{\mathbb{P}(\bm{X}) \in \mathbb{K}^n(\bm{X})}  \left[ (1-\mu) \mathbb{P}(x,\bm{e})-\mu \mathbb{P}(\neg x,\bm{e}) \right] \stackrel{(1)}{=} \min_{\mathbb{P}_n(\bm{X}) \in \mathbb{K}^n(\bm{X})}  \left[ (1-\mu) \mathbb{P}_n(x,\bm{e})-\mu \mathbb{P}_n(\neg x,\bm{e}) \right] \\
 \stackrel{(2)}{=}  \min_{\mathbb{P}_n(\bm{X}) \in \mathbb{K}^n(\bm{X})}  \left[ (1-\mu)\sum_{i=1}^k \mathbb{P}_{p_i}(x,\bm{e}_l)\mathbb{P}_{s_i}(\bm{e}_r)\theta_i      -\mu \sum_{i=1}^k \mathbb{P}_{p_i}(\neg x, \bm{e}_l)\mathbb{P}_{s_i}(\bm{e}_r)\theta_i \right] &\\
= \min_{\mathbb{P}_n(\bm{X}) \in \mathbb{K}^n(\bm{X})}  \left[ \sum_{i=1}^k [(1-\mu)\mathbb{P}_{p_i}(x,\bm{e}_l) -\mu \mathbb{P}_{p_i}(\neg x, \bm{e}_l)]\cdot \mathbb{P}_{s_i}(\bm{e}_r)\cdot\theta_i \right] &\\
\stackrel{(3)}{=} \min_{(\theta_1,\dots , \theta_k) \in \mathbb{K}_n(P)}  \left[ \sum_{i=1}^k \min_{\mathbb{P}_{p_i}(\bm{X}_l) \in \mathbb{K}^{p_i}(\bm{X}_l)} \left[(1-\mu)\mathbb{P}_{p_i}(x,\bm{e}_l) -\mu \mathbb{P}_{p_i}(\neg x, \bm{e}_l)\right] \cdot  \min_{\mathbb{P}_{s_i}(\bm{X}_r) \in \mathbb{K}^{s_i}(\bm{X}_r)} \mathbb{P}_{s_i}(\bm{e}_r)\cdot\theta_i \right] &\\
\stackrel{(4)}{=} \min_{(\theta_1,\dots , \theta_k) \in \mathbb{K}_n(P)}  \left[ \sum_{i=1}^k \min_{\mathbb{P}(\bm{X}_l) \in \mathbb{K}^{p_i}(\bm{X}_l)} \left[(1-\mu)\mathbb{P}(x,\bm{e}_l) -\mu \mathbb{P}(\neg x, \bm{e}_l)\right] \cdot  \min_{\mathbb{P}(\bm{X}_r) \in \mathbb{K}^{s_i}(\bm{X}_r)} \mathbb{P}(\bm{e}_r)\cdot\theta_i \right] &\\
\stackrel{(5)}{=} \min_{(\theta_1,\dots , \theta_k) \in \mathbb{K}_n(P)}  \left[ \sum_{i=1}^k \underline{\pi}(p_i)\cdot \underline{\sigma}_{s_i}(\underline{\pi}(p_i))\cdot \theta_i \right] & \\
\end{align*}

\normalsize

where equalities (1) and (4) are because optimal values are attained in extreme points of the strong extension, (2) is thanks to Theorem  \cite[Theorem 6]{kisa2014}. Equality (3) is because the strong extensions of $p_i$ and $s_i$ are distinct, thus the optimization can be performed separately. Note that here the singly connectedness assumption is necessary, as explained in the last part of this section. Equality  (5) is by induction hypothesis plus $\underline{\sigma}_{s_i}$'s definition.
\end{proof}

\begin{proof}[Proof of Theorem \ref{theo:credalmap}] If $n$ is a terminal, it is easy to see the correctness of the algorithm. Suppose that $n$ is a decision node, $n = (\{(p_i,s_i)\}_{i=1}^k, \mathbb{K}_n)$. For a given $\bo{x} \in val(\bo{X})$, $\bo{xe}$ is a total instantiation of its variables. Since the maximum on $\mathbb{K}^n$ is realized on extreme points we can consider PSDDs probability distributions when computing the maximum. Remember that each considered instantiation of $\bo{X}$ selects a unique branch $1\leq i \leq k$ of $n$. With the same reasoning adopted in the proof of Algorithm \ref{alg:map}, we can argue that  
\small
\begin{align*}
\mathop{\max}_{\bo{x}\in val(\bo{X})} \mathop{\max}_{\mathbb{P} \in \mathbb{K}^n}\mathbb{P}(\bo{x},\bm{e})  & = \mathop{\max}_{1 \leq i \leq k} \mathop{\max}_{\bo{x}\in val(\bo{X})} \mathop{\max}_{\mathbb{K}_{n,i}} \theta_i \cdot  \mathop{\max}_{\mathbb{K}^{p_i}}\mathbb{P}_{p_i}(\bo{x}_l \bo{e}_l) \cdot  \mathop{\max}_{\mathbb{K}^{s_i}}\mathbb{P}_{s_i} (\bo{x}_r \bo{e}_r)\\
& = \mathop{\max}_{1 \leq i \leq k} \mathop{\max}_{\mathbb{K}_{n,i}} \theta_i \cdot [ \mathop{\max}_{\bo{x}_l\in val(\bo{X}_l)} \mathop{\max}_{\mathbb{K}^{p_i}}\mathbb{P}_{p_i}(\bo{x}_l \bo{e}_l)] \cdot  [\mathop{\max}_{\mathbb{K}^{s_i}}\mathop{\max}_{\bo{x}_r\in val(\bo{X}_r)} \mathbb{P}_{s_i}(\bo{x}_r \bo{e}_r)]\\
& = \mathop{\max}_{1 \leq i \leq k}  \mathop{\max}_{\mathbb{K}_{n,i}}\theta_i \cdot  M(p_i) \cdot  M(s_i)
\end{align*}
\normalsize
\end{proof}

\begin{proof}[Proof of Theorem \ref{theo:rob}]
Base case: Let $n$ be a terminal node. 
\begin{itemize}
\item If $var(n) \in \bo{X}$:
\begin{itemize}
\item if $n\in \{X, \neg X\}$, then if $\bo{x}_v^*\models n$ the maximization clearly reduces to $1$ , while if $\bo{x}_v^*  \not \models n$, the expression is not defined and we refer to the convention;
\item if $n= (X: [l,u])$: if $\bo{x}_v^*=\top$, 

$$\mathop{\max}_{\theta \in [l,u]} \{ \frac{\mathbb{P}_n(\top)}{\mathbb{P}_n(\top)}, \frac{\mathbb{P}_n(\bot)}{\mathbb{P}_n(\top)}\} = \max \{ \mathop{\max}_{\theta \in [l,u]} \tfrac{\theta}{\theta}, \mathop{\max}_{\theta \in [l,u]} \tfrac{1-\theta}{\theta}\} =\max \{1, \tfrac{1-l}{l}\},$$

otherwise, if $\bo{x}_v^*=\bot$, 

$$\mathop{\max}_{\theta \in [l,u]} \{ \frac{\mathbb{P}_n(\bot)}{\mathbb{P}_n(\bot)}, \frac{\mathbb{P}_n(\top)}{\mathbb{P}_n(\bot)} \}= \max \{\mathop{\max}_{\theta \in [l,u]} \tfrac{1-\theta}{1-\theta}, \mathop{\max}_{\theta \in [l,u]} \tfrac{\theta}{1-\theta} \}=\max \{1, \tfrac{u}{1-u}\},$$

\end{itemize}
\item If $var(n) \in \bo{E}$:  if $\bo{e}\models n$, the fraction reduces to $1$, while if $\bo{e}\not\models n$, again the expression is not defined hence we refer to the convention.
\end{itemize}

Induction step: let  $n = ({(p_i,s_i)}_{i=1}^k,\mathbb{K}_n(P))$ be a decision node. If $\bo{x}^*_v\bo{e}_v \not \models \interp{n}$, $V(n) = 1$, in accord with the convention. Assume now that $\bo{x}^*_v\bo{e}_v \models \interp{n}$. Since $\bo{x}^*$ is fixed, there is a unique $1\leq j \leq k$ such that $\bo{x}_{v,l}^*\bo{e}_{v,l}\models p_j$. Then ( as usual, we can perform the optimization on the extreme points of the strong extension)
$$\mathop{\max}_{\bo{x}_v\in val(\bo{X}_v)} \mathop{\max}_{\mathbb{P}_n \in \mathbb{K}^n} \frac{\mathbb{P}_n(\bo{x}_v, \bo{e}_v)}{\mathbb{P}_n(\bo{x}_v^*, \bo{e}_v)} =  \mathop{\max}_{\mathbb{P}_n \in \mathbb{K}^n} \frac{\mathop{\max}_{\bo{x}_v \in val(\bo{X}_v)}\mathbb{P}_n(\bo{x}_v, \bo{e}_v)}{\mathbb{P}_{p_j}(\bo{x}^*_{v^l}, \bo{e}_{v^l})\cdot \mathbb{P}_{s_j}(\bo{x}^*_{v^r}, \bo{e}_{v^r}) \cdot \theta_j}$$

 $$ = \mathop{\max}_{\mathbb{K}^n} \frac{\mathop{\max}_{1\leq i \leq k} \theta_i \mathop{\max}_{\bo{x}_{v^l}}\mathbb{P}_{p_i}(\bo{x}_{v^l}, \bo{e}_{v^l})\cdot \mathop{\max}_{\bo{x}_{v^r}}\mathbb{P}_{s_i}(\bo{x}_{v^r}, \bo{e}_{v^r})}{\mathbb{P}_{p_j}(\bo{x}^*_{v^l}, \bo{e}_{v^l})\cdot \mathbb{P}_{s_j}(\bo{x}^*_{v^r}, \bo{e}_{v^r}) \cdot \theta_j}$$

Now, for $1\leq i \leq k$, if $i=j$ the above expression simplifies and becomes

$$\mathop{\max}_{\mathbb{P}_{p_j} \in \mathbb{K}^{p_j}} \frac{ \mathop{\max}_{\bo{x}_{v^l}}\mathbb{P}_{p_j}(\bo{x}_{v^l}, \bo{e}_{v^l})}{\mathbb{P}_{p_j}(\bo{x}^*_{v^l}, \bo{e}_{v^l})} \cdot \mathop{\max}_{\mathbb{P}_{s_j} \in \mathbb{K}^{s_j}}\frac{\mathop{\max}_{\bo{x}_{v^r}}\mathbb{P}_{s_j}(\bo{x}_{v^r}, \bo{e}_{v^r})}{\mathbb{P}_{s_j}(\bo{x}^*_{v^r}, \bo{e}_{v^r})}$$
that is, by induction hypothesis,

$$V(p_j)\cdot V(s_j).$$

If we fix a $i \neq j$ instead, the optimizations might be performed independently since the CSs above and below are distinct:

 $$ = \mathop{\max}_{\mathbb{K}_n} \frac{\theta_i \mathop{\max}_{\bo{x}_{v^l}} \mathop{\max}_{\mathbb{P}_{p_i} \in \mathbb{K}^{p_i}}  \mathbb{P}_{p_i}(\bo{x}_{v^l}, \bo{e}_{v^l})\cdot \mathop{\max}_{\bo{x}_{v^r}}\mathop{\max}_{\mathbb{P}_{s_i} \in \mathbb{K}^{s_i}}\mathbb{P}_{s_i}(\bo{x}_{v^r}, \bo{e}_{v^r})}{\theta_j \cdot \underline{\mathbb{P}}_{p_j}(\bo{x}^*_{v^l}, \bo{e}_{v^l})\cdot \underline{\mathbb{P}}_{s_j}(\bo{x}^*_{v^r}, \bo{e}_{v^r})}$$ 
that is,

 $$ = \mathop{\max}_{\mathbb{K}_n} \frac{\theta_i \cdot M(p_j)\cdot M(s_j)}{\theta_j \cdot \underline{\mathbb{P}}_{p_j}(\bo{x}^*_{v^l}, \bo{e}_{v^l})\cdot \underline{\mathbb{P}}_{s_j}(\bo{x}^*_{v^r}, \bo{e}_{v^r})},$$
 which completes the proof.
\end{proof}

\section*{CSDD quantification for Example \ref{ex:toy_IDM}}
\begin{align*}
\theta_1 &= P(\neg X_1 \wedge \neg X_2) \in \left[ \frac{n_{\theta_1} }{n+s},\frac{n_{\theta_1} +s}{n+s} \right]\\
\theta_2 &= P((X_1\wedge \neg X_2)\vee (\neg X_1 \wedge X2))\in \left[ \frac{n_{\theta_2}}{n+s},\frac{n_{\theta_2}+s}{n+s}\right]\\
\theta_3 &= P(\neg X_3 | \neg X_1 \wedge \neg X_2 )\in \left[ \frac{n_{\theta_3}}{n_{\theta_1}+s},\frac{n_{\theta_3}+s}{n_{\theta_1}+s}\right]\\
\theta_4 &= P(X_1|(X_1 \wedge \neg X_2)\vee (\neg X_1 \wedge X_2))\in \left[ \frac{n_{\theta_4}}{n_{\theta_2}+s},\frac{n_{\theta_4}+s}{n_{\theta_2}+s}\right]\\
\theta_5 &= P(X_3| (X_1 \wedge \neg X_2)\vee (\neg X_1 \wedge X_2))\in \left[ \frac{n_{\theta_5}}{n_{\theta_2}+s},\frac{n_{\theta_5}+s}{n_{\theta_2}+s}\right]\\
\theta_6 &= P(X_4|(\neg X_1 \wedge \neg X_2)\wedge X_3)\in \left[ \frac{n_{\theta_6}}{n_{\theta_1} -n_{\theta_3} +s},\frac{n_{\theta_6}+s}{n_{\theta_1} -n_{\theta_3}+s}\right]\\
\theta_7 &= P(X_4|(X_1 \wedge \neg X_2)\vee (\neg X_1 \wedge X_2)\wedge \neg X_3)\in \left[ \frac{n_{\theta_7}}{n_{\theta_2} -n_{\theta_5}+s},\frac{n_{\theta_7}+s}{n_{\theta_2} -n_{\theta_5}+s}\right]
\end{align*}

 where

\begin{align*}
n_{\theta_1} & =  n_2+n_6+n_9\\
n_{\theta_2} & =  n_0+n_1+n_4+n_5+n_7+n_8\\
n_{\theta_3} & =  n_6\\
n_{\theta_4} & =  n_0+n_5+n_8\\
n_{\theta_5} & = n_1+n_5 \\
n_{\theta_6} & = n_2 \\
n_{\theta_7} & =  n_0+n_4
\end{align*}

\bibliography{biblio}
\end{document}